\newtheorem{theorem}{Theorem}[section]
\newtheorem{corollary}{Corollary}[theorem]
\newtheorem{lemma}[theorem]{Lemma}
\title{Probabilistic Graph Rewiring via Virtual Nodes}
\author{Chendi Qian\textsuperscript{1*} \quad Andrei Manolache\textsuperscript{234*} \quad Christopher Morris\textsuperscript{1$\dagger$} \quad Mathias Niepert\textsuperscript{23$\dagger$} \\
\textsuperscript{1}Computer Science Department, RWTH Aachen University, Germany \\
\textsuperscript{2}Computer Science Department, University of Stuttgart, Germany \\
\textsuperscript{3}IMPRS-IS \quad \textsuperscript{4}Bitdefender, Romania \\
\texttt{chendi.qian@log.rwth-aachen.de}\\
\texttt{andrei.manolache@ki.uni-stuttgart.de}
}
\newcommand{\mG}{\mathbold{G}}
\newcommand{\mL}{\mathbold{L}}
\newcommand{\mLG}{\mathbold{L}}
\newcommand{\cO}{\mathcal{O}}
\newcommand{\Nb}{\mathbb{N}}
\newcommand{\Rb}{\mathbb{R}}
\newcommand{\wlone}{$1$\textrm{-}\textsf{WL}}
\newcommand{\hb}{\mathbold{h}}
\newcommand{\UPD}{\mathsf{UPD}}
\newcommand{\AGG}{\mathsf{AGG}}
\newcommand{\RO}{\mathsf{READOUT}}
\newcommand{\REL}{\mathsf{RELABEL}}
\newcommand{\bbR}{\ensuremath{\mathbb{R}}}
\newcommand{\new}[1]{\emph{#1}}
\DeclarePairedDelimiter{\norm}{\lVert}{\rVert}
\renewcommand{\vec}[1]{\mathbold{#1}}
\newcommand{\oms}{\{\!\!\{}
\newcommand{\cms}{\}\!\!\}}
\newcommand{\tup}[1]{{(#1)}}
\newcommand{\bv}{\bm{u}}
\newcommand{\btheta}{\bm{\theta}}
\DeclareMathOperator*{\sigmoid}{sigmoid}
\definecolor{cb-black}      {RGB}{  0,   0,   0}
\definecolor{cb-blue-green} {RGB}{  0,  073,  073}
\definecolor{cb-rose}       {RGB}{255, 109, 182}
\definecolor{cb-salmon-pink}{RGB}{255, 182, 119}
\definecolor{cb-purple}     {RGB}{ 73,   0, 146}
\definecolor{cb-blue}       {RGB}{ 0, 109, 219}
\definecolor{cb-lilac}      {RGB}{182, 109, 255}
\definecolor{cb-blue-sky}   {RGB}{109, 182, 255}
\definecolor{cb-blue-light} {RGB}{182, 219, 255}
\definecolor{cb-burgundy}   {RGB}{146,   0,   0}
\definecolor{cb-brown}      {RGB}{146,  73,   0}
\definecolor{cb-clay}       {RGB}{219, 209,   0}
\definecolor{cb-green-lime} {RGB}{ 36, 255,  36}
\definecolor{cb-yellow}     {RGB}{255, 255, 109}
\definecolor{cb-green-sea}{HTML}{0173B2}
\definecolor{cb-burgundy}{HTML}{029E73}
\definecolor{cb-lilac}{HTML}{D55E00}
\definecolor{first}{HTML}{029E73}
\definecolor{second}{HTML}{0173B2}
\definecolor{third}{HTML}{D55E00}
\definecolor{fourth}{HTML}{8000ff}
\definecolor{sns_cb1}{HTML}{029E73}
\definecolor{sns_cb2}{HTML}{0173B2}
\definecolor{sns_cb3}{HTML}{D55E00}
\definecolor{sns_cb4}{HTML}{CC78BC}
\definecolor{sns_cb5}{HTML}{ECE133}
\definecolor{sns_cb6}{HTML}{56B4E9}
\begin{document}

\maketitle

\def\thefootnote{*}\footnotetext{These authors contributed equally.}
\def\thefootnote{$\dagger$}\footnotetext{Co-senior authorship.}\def\thefootnote{\arabic{footnote}}

\begin{abstract}
    Message-passing graph neural networks (MPNNs) have emerged as a powerful paradigm for graph-based machine learning. Despite their effectiveness, MPNNs face challenges such as under-reaching and over-squashing, where limited receptive fields and structural bottlenecks hinder information flow in the graph. While graph transformers hold promise in addressing these issues, their scalability is limited due to quadratic complexity regarding the number of nodes, rendering them impractical for larger graphs. Here, we propose \emph{implicitly rewired message-passing neural networks} (IPR-MPNNs), a novel approach that integrates \emph{implicit} probabilistic graph rewiring into MPNNs. By introducing a small number of virtual nodes, i.e., adding additional nodes to a given graph and connecting them to existing nodes, in a differentiable, end-to-end manner, IPR-MPNNs enable long-distance message propagation, circumventing quadratic complexity. Theoretically, we demonstrate that IPR-MPNNs surpass the expressiveness of traditional MPNNs.  Empirically, we validate our approach by showcasing its ability to mitigate under-reaching and over-squashing effects, achieving state-of-the-art performance across multiple graph datasets. Notably, IPR-MPNNs outperform graph transformers while maintaining significantly faster computational efficiency.
\end{abstract}

\section{Introduction}
\new{Message-passing graph neural networks} (MPNNs)~\citep{Gil+2017,Sca+2009} recently emerged as the most prominent machine-learning architecture for graph-structured,  applicable to a large set of domains where data is naturally represented as graphs, such as bioinformatics~\citep{Jum+2021,Won+2023}, social network analysis~\citep{Eas+2010}, and combinatorial optimization~\citep{Cap+2023,qian2023exploring}.

MPNNs have been studied extensively in theory and practice~\citep{Boe+2023,Gil+2017,Kip+2017,Mar+2019c,Mor+2019,Mor+2022,Mor+2023,Vel+2018,xu2018how}. Recent works have shown that MPNNs suffer from over-squashing~\citep{Alon2020}, where bottlenecks arise from stacking multiple layers leading to large receptive fields, and under-reaching~\citep{Barceló2020The}, where distant nodes fail to communicate effectively because MPNNs' receptive fields are too narrow. These phenomena become prevalent when dealing with graphs with a large diameter, potentially hindering the performance of MPNNs on essential applications that depend on long-range interactions, such as protein folding \citep{GROMIHA199949}. However, modeling long-range interactions in atomistic systems such as proteins remains a challenging problem often solved in an ad-hoc fashion using coarse-graining methods~\citep{saunders2013coarse,husic2020coarse}, effectively grouping the input nodes into cluster nodes. 

Recently, \new{graph rewiring} \citep{Bober2022-md,Deac2022-lu,Gutteridge2023-wx,Karhadkar2022-uz,Shi+2023,Topping2021-iy} techniques emerged,  adapting the graph structure to enhance connectivity and reduce node distance through methods ranging from edge additions to leveraging spectral properties and expander graphs. However, these approaches typically employ heuristic methods for selecting node pairs to rewire. Furthermore, \new{graph transformers} (GTs) \citep{Chen22a, Dwivedi2022-vv, He2022-wq, Mue+2023, Mue+2024b, Rampasek2022-uc} and adaptive techniques like those by \citet{errica2023adaptive} improve handling of long-range relationships but face challenges of quadratic complexity and extensive parameter sets, limiting their scalability.

\begin{figure}
    \centering
    \includegraphics[width=1\textwidth]{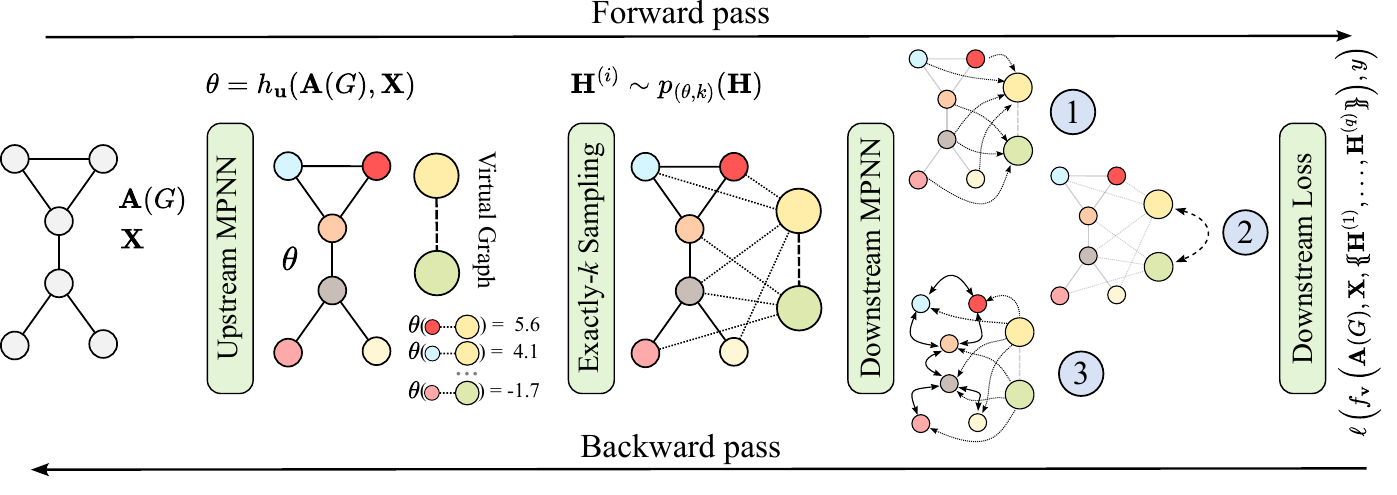}
     \caption{Overview of how IPR-MPNNs implicitly rewire a graph through adding virtual nodes. IPR-MPNNs use an \new{upstream MPNN} to learn priors $\btheta$ for connecting original nodes with virtual nodes via edges, parameterizing a probability mass function conditioned on exactly-$k$ constraints. Subsequently, we sample exactly $k$ edges from this distribution for each original node, connecting it to $k$ virtual nodes. We input the resulting graph to a  \new{downstream model}, typically an MPNN, for the final predictions task, propagating information from (1) original nodes to virtual nodes, (2) among virtual nodes, and (3) among original nodes. On the backward pass, the gradients of the loss $\ell$ regarding the parameters $\btheta$ are approximated through the derivative of the exactly-$k$ marginals.   \label{fig:overview}}
\end{figure}

Most similar to IPR-MPNNs is the work of~\citet{qian2023probabilistically}, which, like IPR-MPNNs, leverage recent techniques in differentiable $k$-subset sampling~\citep{ahmed2022simple} to learn to add or remove edges of a given graph. However, like GTs, their approach suffers from quadratic complexity due to their need to compute a score for each node pair.

\paragraph{Present Work} Our proposed IPR-MPNN architecture advances end-to-end probabilistic adaptive graph rewiring. Unlike the PR-MPNN framework of \citet{qian2023probabilistically}, which suffers from quadratic complexity since the edge distribution is modeled \textit{explicitly}, IPR-MPNNs \emph{implicitly} transmits information across different parts of a graph by learning to connect the existing graph with newly-added virtual nodes, effectively circumventing quadratic complexity; see~\Cref{fig:overview} for a high-level overview of IPR-MPNNs.
Our contributions are as follows.
\begin{enumerate}
    \item We introduce IPR-MPNNs, adding virtual nodes to graphs, and learn to rewire them to the existing nodes end-to-end. IPR-MPNNs successfully overcome the quadratic complexity of graph transformers and previous graph rewiring techniques.
    \item Theoretically, we demonstrate that IPR-MPNNs exceed the expressive capacity of standard MPNNs, typically limited by the $1$-dimensional Weisfeiler-–Leman algorithm.
    \item Empirically, we show that IPR-MPNNs outperform standard MPNN and GT architectures on a large set of established benchmark datasets, all while maintaining significantly faster computational efficiency.
    \item We show that IPR-MPNNs are reducing the total effective resistance~\citep{black2023understanding} of multiple molecular datasets while also significantly improving the layer-wise sensitivity \citep{Di_Giovanni2023-up,Xu+2018} between distant nodes when compared to the base model.
\end{enumerate}
\emph{In summary, IPR-MPNNs represent a significant advancement towards scalable and adaptable MPNNs. They enhance expressiveness and adaptability to various data distributions while scaling to large graphs effectively.}

\subsection{Related Work}

In the following, we discuss relevant related work.

\paragraph{MPNNs} Recently, MPNNs~\citep{Gil+2017,Sca+2009} emerged as the most prominent graph machine learning architecture. Notable instances of this architecture include, e.g.,~\citet{Duv+2015,Ham+2017}, and~\citet{Vel+2018}, which can be subsumed under the message-passing framework introduced in~\citet{Gil+2017}. In parallel, approaches based on spectral information were introduced in, e.g.,~\citet{Bru+2014,Defferrard2016,Gam+2019,Kip+2017,Lev+2019}, and~\citet{Mon+2017}---all of which descend from early work in~\citet{bas+1997,Gol+1996,Kir+1995,Mer+2005,mic+2005,mic+2009,Sca+2009}, and~\citet{Spe+1997}.

\paragraph{Limitations of MPNNs} MPNNs are inherently biased towards encoding local structures, limiting their expressive power~\citep{Mor+2019,Mor+2022,xu2018how}. Specifically, they are at most as powerful as distinguishing non-isomorphic graphs or nodes with different structural roles as the \new{$1$-dimensional Weisfeiler--Leman algorithm}~\citep{Wei+1968}, a well-studied heuristic for the graph isomorphism problem; see~\Cref{sec:wl}. Additionally, they cannot capture global or long-range information, often linked to phenomena such as under-reaching ~\citep{Barceló2020The} or over-squashing ~\citep{Alon2020}, with the latter being heavily investigated in recent works.

\paragraph{Graph Transformers} Different from the above, graph transformers~\citep{Chen22a,chen2022nagphormer,Dwivedi2022-vv,He2022-wq,hussain2022egt,Kim+2022,ma2023grit,mialon2021graphit,Mue+2023,Mue+2024b,Rampasek2022-uc,Shi+2023} and similar global attention mechanisms~\citep{Liu2021-nm, Wu2022-vr} marked a shift from local to global message passing, aggregating over all nodes. While not understood in a principled way, empirical studies indicate that graph transformers possibly alleviate over-squashing; see~\citet{Mue+2023}. However, all transformers suffer from their quadratic space and memory requirements due to computing an attention matrix. 

\paragraph{Rewiring Approaches for MPNNs} Several recent works aim to circumvent over-squashing via graph rewiring. The most straightforward way of graph rewiring is incorporating multi-hop neighbors. For example, \citet{bruel2022rewiring} rewires the graphs with $k$-hop neighbors and virtual nodes and augments them with positional encodings. MixHop \citep{Hai+2019}, SIGN \citep{frasca2020sign}, DIGL \citep{Kli+2019}, and SP-MPNN \citep{Abboud2022-sl} can also be considered as graph rewiring as they can reach further-away neighbors in a single layer. Particularly, \citet{Gutteridge2023-wx} rewires the graph similarly to \citet{Abboud2022-sl} but with a novel delay mechanism, showcasing promising empirical results. Several rewiring methods depend on particular metrics, e.g., Ricci or Forman curvature~\citep{Bober2022-md} and balanced Forman curvature~\citep{Topping2021-iy}. In addition, \citet{Deac2022-lu,Shi+2023} utilize expander graphs to enhance message passing and connectivity, while \citet{Karhadkar2022-uz} resort to spectral techniques, and \citet{Banerjee2022-yd} propose a greedy random edge flip approach to overcome over-squashing. DiffWire \citep{arnaiz2022diffwire} conducts fully differentiable and parameter-free graph rewiring by leveraging the Lov\'asz bound and spectral gap. Refining \citet{Topping2021-iy}, \citet{Di_Giovanni2023-up} analyzed how the architectures' width and graph structure contribute to the over-squashing problem, showing that over-squashing happens among nodes with high commute time, stressing the importance of rewiring techniques. Contrary to our proposed method, these strategies to mitigate over-squashing rely on heuristic rewiring methods or purely randomized approaches that may not adapt well to a given prediction task. LASER \citep{barbero2023locality} performs graph rewiring while respecting the original graph structure. The recent work S2GCN \citep{geisler2024spatio} combines spectral and spatial graph filters and implicitly introduces graph rewiring for message passing. Most similar to IPR-MPNNs is the work of~\citet{qian2023probabilistically}, which, like IPR-MPNNs, leverage recent techniques in differentiable $k$-subset sampling~\citep{ahmed2022simple} to learn to add or remove edges of a given graph. However, like GTs, their approach suffers from quadratic complexity due to their need to compute a score for each node pair. In addition to the differentiable $k$-subset sampling~\citep{ahmed2022simple} method we use in this work, there are other gradient estimation approaches such as \textsc{Gumbel SoftSub-ST} \citep{xie2019subsets} and \textsc{I-MLE} \citep{Niepert2021-pa, minervini23aimle}.

There is also a large set of works from graph structure learning proposing heuristical graph rewiring approaches and hierarchical MPNNs; see~\Cref{gsl,hier} for details.

\section{Background}

In the following, we introduce notation and formally define MPNNs.

\paragraph{Notations} Let $\mathbb{N} \coloneqq \{ 1,2,3, \dots \}$. For $n \geq 1$, let $[n] \coloneqq \{ 1, \dotsc, n \} \subset \Nb$. We use $\{\!\!\{ \dots\}\!\!\}$ to denote multisets, i.e., the generalization of sets allowing for multiple instances for each of its elements. A \new{graph} $G$ is a pair $(V(G),E(G))$ with \emph{finite} sets of
\new{nodes} or \new{vertices} $V(G)$ and \new{edges} $E(G) \subseteq \{ \{u,v\} \subseteq V(G) \mid u \neq v \}$. If not otherwise stated, we set $n \coloneqq |V(G)|$, and the graph is of \new{order} $n$. We also call the graph $G$ an $n$-order graph. For ease of notation, we denote the edge $\{u,v\}$ in $E(G)$ by $(u,v)$ or $(v,u)$. Throughout the paper, we use standard notations, e.g., we denote the \new{neighborhood} of a vertex $v$ by $N(v)$ and $\ell(v)$ denotes its discrete vertex label, and so on; see~\Cref{notation_app} for details. 

\paragraph{Message-passing Graph Neural Networks}
\label{sec:gnn}
Intuitively, MPNNs learn a vectorial representation, i.e., a $d$-dimensional real-valued vector, representing each vertex in a graph by aggregating information from neighboring vertices. Let $\mG=(G,\vec{X})$ be an $n$-order attributed graph with node feature matrix $\vec{X} \in \mathbb{R}^{n \times d}$, for $d > 0$, following, \citet{Gil+2017} and \citet{Sca+2009}, in each layer, $t > 0$,  we compute vertex features
\begin{equation*}\label{def:gnn}
	\hb_{v}^\tup{t} \coloneqq
	\UPD^\tup{t}\Bigl(\hb_{v}^\tup{t-1},\AGG^\tup{t} \bigl(\oms \hb_{u}^\tup{t-1}
	\mid u\in N(v) \cms \bigr)\Bigr) \in \Rb^{d},
\end{equation*}
where  $\UPD^\tup{t}$ and $\AGG^\tup{t}$ may be parameterized functions, e.g., neural networks, and $\hb_{v}^\tup{t} \coloneq \vec{X}_v$. In the case of graph-level tasks, e.g., graph classification, one uses
\begin{equation*}\label{def:readout}
	\hb_G \coloneqq \RO\bigl( \oms \hb_{v}^{\tup{T}}\mid v\in V(G) \cms \bigr) \in \Rb^d,
\end{equation*}
to compute a single vectorial representation based on learned vertex features after iteration $T$. Again, $\RO$  may be a parameterized function, e.g., a neural network. To adapt the parameters of the above three functions, they are optimized end-to-end, usually through a variant of stochastic gradient descent, e.g.,~\citet{Kin+2015}, together with the parameters of a neural network used for classification or regression.

\section{Implicit Probabilistically Rewired MPNNs}

\new{Implicit probabilistically rewired message-passing neural networks} (IPR-MPNNs) learn a probability distribution over edges connecting the \new{original nodes} of a graph to additional \new{virtual nodes}, providing an \textit{implicit} way of enhancing the graph connectivity. To learn to rewire original nodes with these added virtual nodes, IPR-MPNNs use an \new{upstream model}, usually an MPNN, to generate scores or (unnormalized) priors $\btheta \coloneq h_{\bm{u}}(\vec{A}(G), \vec{X}) \in \Rb^{n \times m}$, where $G$ is an $n$-order graph with adjacency matrix $\vec{A}(G) \in \{ 0,1 \}^{n \times n}$, node feature matrix $\vec{X} \in \Rb^{n \times d}$, for $d > 0$, and number of virtual nodes $m$ with $m \ll n$. 

IPR-MPNNs use the priors $\btheta$ from the upstream model to sample new edges between the original nodes and the $m$ virtual nodes from the posterior constrained to exactly $k$ edges, thus obtaining an \new{assignment matrix} $\mathbold{H} \in \{0,1\}^{n \times m}$, i.e., an adjacency matrix between the input and virtual nodes. The assignment matrix $\mathbold{H}$ is then used in a \new{downstream model} $f_{\bm{v}}$, utilized for solving our downstream task, e.g., graph-level classification or regression. Leveraging recent advancements in gradient estimation for $k$-subset sampling \citep{ahmed2022simple}, the upstream and downstream models are jointly optimized, enabling the model to be trained end-to-end; see below.

Hence, unlike PR-MPNNs~\citep{qian2023probabilistically}, which in the worst case \textit{explicitly} model an edge distribution for all $n^2$ possible edge candidates for rewiring, IPR-MPNNs leverage virtual nodes for implicit rewiring and passing long-range information. Therefore, IPR-MPNNs benefit from better computation complexity while being more expressive than the \wlone{}; see~\Cref{sec:theory}. Moreover, intuitively, it is also easier to model a distribution of edges connected to a few virtual nodes than to learn the explicit distribution of all possible edges in a graph. More specifically, while in PR-MPNNs, the priors $\btheta$ can have a size of up to $n^2$ for an $n$-order graph, IPR-MPNNs use $m\cdot n$ parameters, therefore significantly enhancing both computational efficiency and model simplicity.

In the following, we describe the IPR-MPNN architecture in detail.

\paragraph{Sampling Edges} Let $\mathfrak{A}_n$ represent the adjacency matrices of $n$-order graphs. Consider $(G, \vec{X})$ as a graph of order $n$ with adjacency matrix $\vec{A}(G) \in \mathfrak{A}_n$ and node feature matrix $\vec{X} \in \mathbb{R}^{n \times d}$, for $d > 0$. IPR-MPNNs maintain a parameterized upstream model $h_{\bv} \colon \mathfrak{A}_n \times \mathbb{R}^{n \times d} \rightarrow \Theta$, usually implemented through an MPNN and parameterized by $\bv$. The upstream model transforms an adjacency matrix along with its node attributes into a set of unnormalized node priors $\btheta \in \Theta \subseteq \mathbb{R}^{n \times m}$, with $m$ denoting the predefined number of virtual nodes. Formally,
\begin{equation*}
	\btheta  \coloneqq h_{\bv}(\vec{A}(G), \vec{X}) \in  \mathbb{R}^{n \times m}.
\end{equation*}

The matrix of priors $\btheta$ serves as the parameter matrix for the conditional probability mass function from which the assignment matrix $\vec{H} \in \{0, 1\}^{n \times m}$ is sampled. Crucially and contrary to prior work \citep{qian2023probabilistically}, these edges connect the input and a \emph{small} number $m$ of virtual nodes. Hence, each row of matrix $\btheta_{i:} \in \mathbb{R}^m$ represents the unnormalized probability of assigning node $i$ to each virtual node. Formally, we have,
\begin{equation*}
	\label{eq:constrained-distribution}
	p_{\btheta}(\vec{H}_{i:}) \coloneqq \prod_{j=1}^M p_{\btheta_{ij}}(\vec{H}_{ij}), \text{ for } i \in [n],
\end{equation*}
where $p_{\btheta_{ij}}(\vec{H}_{ij}=1) = \sigmoid(\btheta_{ij})$ and $p_{\btheta_{ij}}(\vec{H}_{ij}=0) = 1 - \sigmoid(\btheta_{ij})$. Without loss of generality, we allow each node to be assigned to $k$ virtual nodes, with $k \in [m]$. That is, each row of the sampled assignment matrix has exactly $k$ non-zero entries, i.e.,
\begin{equation*}
	\label{def-p-k-theta}
	p_{(\btheta,k)}(\vec{H}) \coloneqq \left\lbrace
	\begin{array}{ll}
		\nicefrac{p_{\btheta}(\vec{H})}{Z} & \text{if } \norm{\vec{H}_{i:}}_1 = k, \text{for all } i \in [n], \\
		0                                  & \text{otherwise},
	\end{array}
	\right.
	\mbox{ with } \ \ \ Z \coloneqq \sum_{\substack{\vec{B} \in \{0, 1\}^{n \times m} \colon\\ \norm{\vec{B}_{i:}}_1 = k, \forall\, i \in [n]}} p_{\btheta}(\vec{B}).
\end{equation*}
We can potentially sample independently $q$ times $\vec{H}^{(i)} \sim p_{(\btheta,k)}(\vec{H})$ and consequently obtain $q$ multiple assignment matrices $\bar{\vec{H}} \coloneqq \{\!\!\{\vec{H}^{(1)}, \vec{H}^{(2)}, \dots, \vec{H}^{(q)}\}\!\!\}$, which, together with corresponding number of copies of $\vec{A}(G)$ and $\vec{X}$, will be utilized by the downstream model for the tasks.

\paragraph{Message-passing Architecture of IPR-MPNNs} Here, we outline the message-passing scheme after adding virtual nodes and edges. Consider an $n$-order graph $G$ and a virtual node set $C(G)$ of cardinality $m$, where each original node $v \in V(G) \coloneq [n]$ is assigned to $k \in [m]$ virtual nodes. We assign original nodes $v$ to a subset of the virtual node using the function $a \colon V(G) \rightarrow [C(G)]_k$, where $v \mapsto \{ c \in C(G) \mid \vec{H}_{vc} = 1  \}$ and $[C(G)]_k$ is the set of $k$-element subsets of the virtual nodes. Conversely, each virtual node $c \in C(G)$ links to several original nodes. Hence, we define an inverse assignment as the set of all original nodes assigned to virtual node $c$, i.e., $a^{-1}(c) \coloneq \{v \in V(G) \mid c \in a(v)\}$. Across the graph, the union of these inverse assignments equals the set of original nodes, i.e., $\bigcup_c a^{-1}(c) = V(G)$. %

We represent the embedding of an original node $v \in V(G)$ at any given layer $t \geq 0$ as $\mathbold{h}^{(t)}_v$, and similarly, the embedding for a virtual node $c \in C(G)$ as $\mathbold{g}^{(t)}_c$. To compute these embeddings, IPR-MPNNs compute initial embeddings for each virtual node. Subsequently, the architecture updates the virtual nodes via the adjacent original nodes' embeddings. Afterward, the virtual nodes exchange messages, and finally, the virtual nodes update adjacent original nodes' embeddings. Below, we outline the steps in order of execution involved in our message-passing algorithm in detail.

\paragraph{Intializing Virtual Node Embeddings} Before executing inter-hierarchical message passing, we need to initialize the embeddings of virtual nodes. To that, given the node assignments $a(v)$, for $v \in V(G)$, we can effectively divide the original graph into several subgraphs, where nodes sharing the same assignment label are grouped together to form an induced subgraph. Formally, for any virtual node $c \in C(G)$, we have the induced subgraph $G_c$ with node subset $V_c(G_c) \coloneq \{v \mid v \in V(G) \cap a^{-1}(c)\}$ and edge set $E_c(G_c) \coloneq \{\{u, v\} \mid \{u, v\} \in E(G), u \in a^{-1}(c) \text{ and } v \in a^{-1}(c)\}$. The initial attributes of the virtual node $c$ are defined by the node features of its corresponding subgraph $G_c$, calculated using an MPNN, i.e.,
\begin{equation}
\label{eq:init_virtual}
\mathbold{g}^{(0)}_c \coloneqq \textsf{MPNN} \left( G_c \right)\!, \text{ for } c \in C(G).
\end{equation}
Alternatively, we can generate random features for each virtual node as initial node features or assign unique identifiers to them. %

\paragraph{Updating Virtual Nodes} In each step $t > 0$, we collect the embeddings from original nodes to virtual nodes according to their assignments and obtain intermediate virtual node embeddings
\begin{equation*}
\label{eq:b2c}
	\bar{\mathbold{g}}^\tup{t}_c \coloneqq \mathsf{AGGn}^\tup{t} \left( \oms \mathbold{h}^\tup{t-1}_v \mid v \in a^{-1}(c) \cms \right)\!,  \text{ for } c \in C(G),
\end{equation*}
where $\mathsf{AGGn}^\tup{t}$ denotes some permutation-equivariant aggregation function designed for multisets.

\paragraph{Updating Among Virtual Nodes}
We assume the virtual nodes form a complete, undirected, unweighted graph, and we perform message passing among the virtual nodes to update their embeddings. That is, at step $t$, we set
\begin{equation*}
\label{eq:c2c}
	\mathbold{g}^{(t)}_c \coloneqq \mathsf{UPDc}^\tup{t} \left(\mathbold{g}^\tup{t-1}_c, \bar{\mathbold{g}}^\tup{t}_c, \mathsf{AGGc}^\tup{t} \left( \oms \bar{\mathbold{g}}^\tup{t}_j \mid j \in C(G), j \neq c \cms \right) \right),
\end{equation*}
where $\mathsf{UPDc}$ and $\mathsf{AGGc}$ are the update and neighborhood aggregation functions for virtual nodes.

\paragraph{Updating Original Nodes}
Finally, we redistribute the embeddings from the virtual nodes back to the base nodes. This update process considers both the neighbors in the original graph and the virtual nodes to which the original nodes are assigned. The updating mechanism is detailed in the following equation,
\begin{equation}
\label{eq:c2b}
	\mathbold{h}^{(t)}_v \coloneqq \mathsf{UPD}^{(t)} \left(\mathbold{h}^{(t-1)}_v, \mathsf{AGG}^{(t)} \left( \oms \hb_{u}^{(t-1)} \mid u\in N(v) \cms  \right), \mathsf{DS}^{(t)} \left( \oms \mathbold{g}^{(t)}_c \mid c \in a(v) \cms \right) \right),
\end{equation}
where $\mathsf{UPD}$ is the update function, $\mathsf{AGG}$ is the aggregation function, and $\mathsf{DS}$ is the distributing function that incorporates embeddings from virtual nodes back to the original nodes. %

\paragraph{Gradient Estimation}
In the context of our downstream MPNN model described above, we define the set of its learnable parameters as $\bm{v}$ and group these with the upstream model parameters $\bm{u}$ into a combined tuple $\bm{w} = (\bm{u}, \bm{v})$. We express the downstream model as $f_{\bm{v}}$, resulting in the loss function
\begin{equation*}
	L\left(\vec{A}(G), \vec{X}, \bar{\vec{H}}; \bm{w} \right) \coloneqq \mathbb{E}_{\vec{H}^{(i)} \sim p_{(\btheta,k)}(\vec{H})} \left[  \ell \left( f_{\bm{v}}\left(\vec{A}(G), \vec{X}, \{\!\!\{\vec{H}^{(1)}, \dots, \vec{H}^{(q)} \}\!\!\}\right), y\right) \right].
\end{equation*}

While the gradients for the downstream model $f_{\bm{v}}$ can be straightforwardly calculated via backpropagation, obtaining gradients for the upstream model parameters $\btheta$ is more challenging, as the assignment matrices $\bar{\vec{H}}$ are sampled from the priors $\btheta$, a process which is not differentiable.

Similar to prior work \citep{qian2023probabilistically}, we utilize \textsc{Simple} \citep{ahmed2022simple}, which efficiently estimates gradients under $k$-subset constraints. This method involves exact sampling in the forward phase and uses the marginal of the priors $\mu(\btheta) \in \mathbb{R}^{n \times m}$ during the backward phase to approximate gradients $\nabla_{\btheta} L \approx \partial_{\btheta} \mu({\btheta}) \nabla_{\vec{H}} \ell.$

The following analysis shows that our proposed method circumvents the problem of being quadratic in the number of input nodes. 

\begin{figure}
	\vspace{-2.5em}

    \centering
    \includegraphics[width=0.489\textwidth]{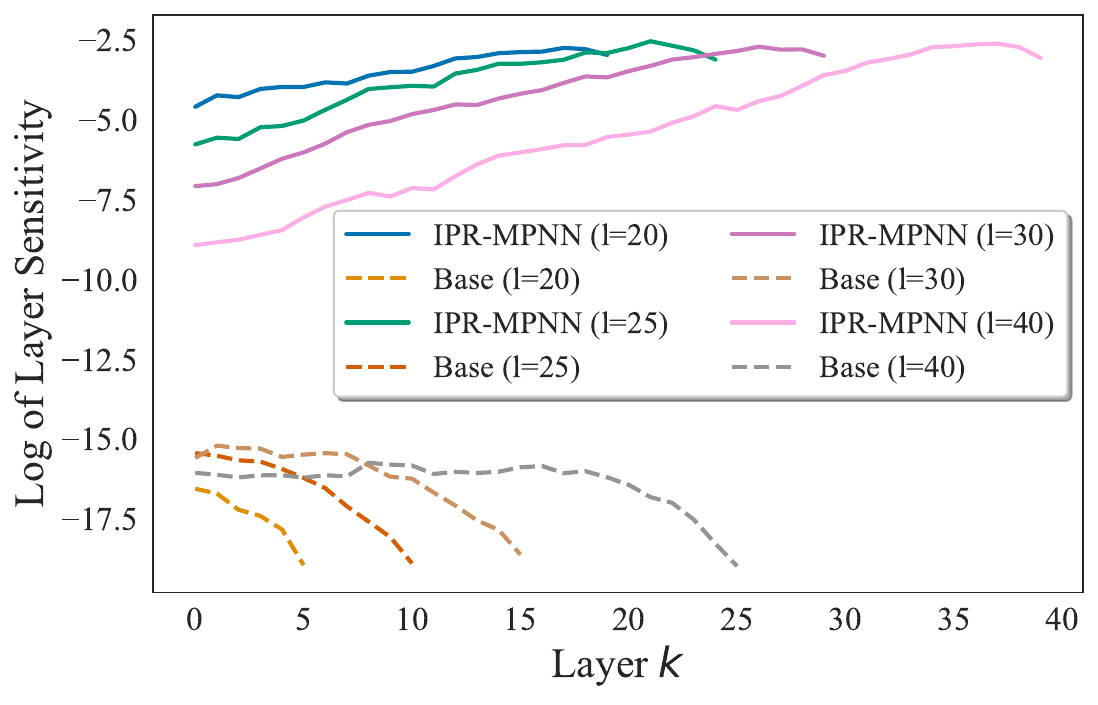}
    \hfill
    \vspace{0.02em}
    \includegraphics[width=0.471\textwidth]{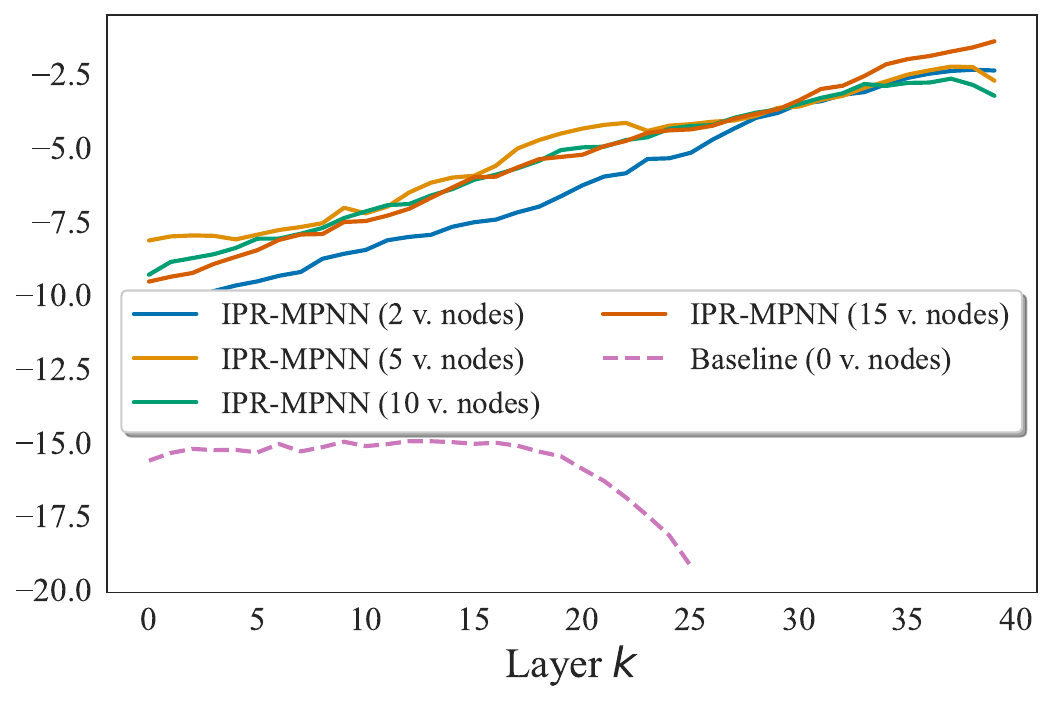}
    \caption{Comparing model sensitivity across different layers for the two most distant nodes from graphs from the \textsc{Zinc} dataset. On the left, we compare the sensitivity for models with a varying number of layers. We can observe that IPR-MPNNs maintain a high sensitivity even for the last layer, while the base models have the sensitivity decaying to $0$. On the right, we compare models with a different number of virtual nodes, observing that the results are similar for all of the variants.}
    \label{fig:model_sensitivity}
    \vskip -0.15in
\end{figure}

\paragraph{Complexity} Assuming a constant number of hidden dimensions and layers of the MPNNs, recall that the runtime complexity of a plain MPNN is $\cO(|E|)$, where $|E|$ is the number of edges of a given graph. In the IPR-MPNN framework, we still have an MPNN backbone, augmented with inter-message passing involving virtual nodes. Hence, obtaining priors for the original nodes via an MPNN or even a simple MLP has a complexity of $\cO(|E| + n \cdot m)$, where $m$ is the number of candidate virtual nodes to be selected. Sampling the node assignment with \citet{ahmed2022simple} is in $\cO(n \cdot \log m \cdot \log k)$. The message aggregation and distribution between base nodes and virtual nodes have complexity $\cO(n \cdot k)$, where $k \in [m]$ is the number of nodes a node is assigned to. Finally, the intra-virtual node message passing is in $\cO(m^2)$, as they are fully connected. In summary, IPR-MPNNs have a running time complexity in $\cO(|E| + n \cdot m + m^2)$. Since $m \ll n$ is a small constant, IPR-MPNNs show great potential due to their low complexity compared to the quadratic worst-case complexities of graph transformers~\citep{Mue+2023} and other rewiring methods, e.g.,~\citet{Gutteridge2023-wx, qian2023probabilistically}.

\section{Expressive Power}\label{sec:theory}

In this section, we analyze the extent to which IPR-MPNNs can separate non-isomorphic graphs on which the \wlone{} isomorphism test fails \citep{xu2018how,Mor+2019}, and whether IPR-MPNNs can maintain isomorphisms between pairs of graphs. We adopt the notion of probabilistic separation from~\citet{qian2023probabilistically}. 

Our arguments rely on the ability of our upstream MPNN to assign arbitrary and distinct exactly-$k$ distributions for each color class. By modifying the graph structure, we can make the rewired graphs \wlone{}-distinguishable, enabling our downstream MPNN to separate them. However, since the expressiveness of the upstream model is also equivalent to \wlone{}, there is a possibility of still separating isomorphic graphs.

The following result demonstrates that we can preserve almost all partial subgraph isomorphisms.

\begin{theorem}
\label{thm:main_sep_2}

    Let $k>0$, $\varepsilon\in(0,1)$, and $G$, $H$ be two graphs with identical \wlone{} stable colorings. Let $M$ be the set of ordered virtual nodes, $V_G$ and $V_H$ be the subset of nodes in $G$ and $H$ that have a color class of cardinality $1$, with $|V_G|=|V_H|=d$, and $W_G$, $W_H$ the subset of nodes that have a color class of cardinality greater than $1$, with $|W_G|=|W_H|=n$. Then, for all choices of \wlone{}-equivalent functions $f$,
    \begin{enumerate}
        \item[(1)] there exists a conditional probability mass function $p_{(\btheta, k)}$ that does \emph{not} separate $G[V_G]$ and $H[V_H]$ with probability at least $1-\varepsilon$.
        \item[(2)] There exists a conditional probability mass function $p_{(\btheta, k)}$ that separates $G[W_G]$ and $H[W_H]$ with probability strictly greater than $0$.
    \end{enumerate}
\end{theorem}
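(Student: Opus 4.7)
}

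The plan is to treat the two parts asymmetrically, exploiting the fact that nodes with singleton color classes are canonically identified across $G$ and $H$, whereas nodes in non-trivial color classes admit multiple ways of being labelled by the sampling step. I would begin with the standard observation that any \wlone{}-equivalent upstream $f$ must output identical prior rows $\btheta_{v:}$ for any two nodes (possibly across $G$ and $H$) that share the same stable \wlone{} color. Consequently, on $V_G$ and $V_H$ there is a bijection $\pi\colon V_G\to V_H$ identifying nodes of the same unique color, and under $\pi$ the prior matrices agree row-by-row; moreover, $\pi$ is a graph isomorphism between $G[V_G]$ and $H[V_H]$ because identical stable colorings force the induced edge structure on singletons to coincide.

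For part (1), I would construct $\btheta$ so that each row $\btheta_{v:}$ for $v\in V_G\cup V_H$ is sharply peaked: fix an arbitrary $k$-subset $S_v\subseteq M$ of virtual nodes (the same choice for $v$ and $\pi(v)$, using row equality) and set $\btheta_{vj}=T$ for $j\in S_v$ and $\btheta_{vj}=-T$ otherwise, for some temperature $T>0$. A direct computation on the exactly-$k$ distribution shows $\Pr\bigl[\vec{H}_{v:}=\mathbbm{1}_{S_v}\bigr]\ge 1-\eta(T)$ with $\eta(T)\to 0$ as $T\to\infty$. Applying a union bound over the at most $2d$ singleton-class nodes in $V_G\cup V_H$ and taking $T$ large enough that $2d\cdot\eta(T)\le\varepsilon$, the deterministic assignment $v\mapsto S_v$ occurs for every relevant node simultaneously with probability at least $1-\varepsilon$. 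On this event, the rewired induced subgraphs on $V_G$ and $V_H$ (together with the incident virtual-node edges) are isomorphic via $\pi$ extended by the identity on $M$, so no \wlone{}-equivalent downstream model can separate them.

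For part (2), the strategy is to deliberately use the sampling noise to break the symmetry that \wlone{} cannot break on its own. I would pick the simplest non-trivial setting $m\ge 2$, $k=1$, and concentrate on a single non-singleton color class $C$ shared by $G$ and $H$. Assign all nodes of $C$ (in both graphs) the same uniform prior over a chosen pair $\{c_1,c_2\}\subseteq M$, and assign every other node of $W_G,W_H$ a sharply-peaked prior pinning them to a fixed virtual node disjoint from $\{c_1,c_2\}$. Each sample then produces a partition of $C$ into two parts labeled by $\{c_1,c_2\}$; up to relabeling the two virtual nodes, this induces a bipartition of the nodes of $C$ in both $G[W_G]$ and $H[W_H]$. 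Because $G$ and $H$ have identical stable colorings but, as graphs, may still differ structurally on $W_G,W_H$, there exists at least one bipartition of $C$ that the \wlone{} refinement distinguishes on $G$ from every bipartition it produces on $H$ (otherwise one could build a \wlone{}-preserving correspondence contradicting the failure of isomorphism that the theorem implicitly allows). Since every bipartition has strictly positive probability under the uniform sampling, at least one such separating event occurs with probability $>0$, giving a \wlone{}-equivalent downstream $f$ that distinguishes the rewired subgraphs.

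The main obstacle I anticipate is part~(2): one must argue rigorously that among the finitely many bipartitions of $C$ produced by sampling, at least one yields \wlone{}-distinguishable rewired graphs. The clean way to handle this is a pigeonhole/counting argument on the orbit structure of \wlone{}-stable colorings under virtual-node augmentation, using that \wlone{} becomes strictly stronger once canonical identifiers are attached to subsets of a color class. The bookkeeping for the union bound in part~(1) and the choice of $T$ is routine; by contrast, establishing that symmetry-breaking via a single color class suffices, without appealing to a specific graph pair, is the technical core of the argument.
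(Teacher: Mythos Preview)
Your plan for part~(1) is essentially the paper's argument: concentrate each singleton-color node's prior on a fixed $k$-subset of $M$, take the per-node success probability close enough to $1$ that the $2d$ events hold simultaneously with probability $\ge 1-\varepsilon$, and then observe that identical deterministic assignments preserve the isomorphism $G[V_G]\cong H[V_H]$ after rewiring. The paper phrases the probability bookkeeping multiplicatively (each node succeeds with probability $\ge (1-\varepsilon)^{1/(2d)}$) rather than via a union bound, and it invokes a separate lemma stating that attaching virtual nodes according to \wlone{} colors cannot create distinguishability, but these are cosmetic differences.

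Part~(2), however, is both overcomplicated and rests on a hypothesis the theorem does not give you. Nothing in the statement says $G[W_G]$ and $H[W_H]$ are non-isomorphic; the point of (2) is precisely that on non-singleton color classes the sampling \emph{can} spuriously separate even isomorphic graphs with positive probability. So the sentence ``otherwise one could build a \wlone{}-preserving correspondence contradicting the failure of isomorphism that the theorem implicitly allows'' has no footing, and the pigeonhole argument you sketch (finding a bipartition of a single class $C$ that distinguishes $G$ from every bipartition of $H$) need not succeed. You also set $k=1$, but $k$ is fixed in the hypotheses.

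The paper's argument for (2) is far simpler and bypasses all of this: take the prior on each node of $W_G\cup W_H$ to be (arbitrarily close to) uniform over $M$. Because the virtual nodes are \emph{ordered} (hence distinguishable by the downstream model), any joint outcome in which every node of $W_G$ is assigned the block $\{1,\dots,k\}$ while at least one node of $W_H$ is assigned $\{2,\dots,k+1\}$ already yields \wlone{}-distinguishable rewired graphs, simply because virtual node $1$ then has different degree on the two sides. This joint event has probability at least $(1-\varepsilon)\binom{m}{k}^{-2n}>0$ under the (approximately) uniform prior, and no structural comparison of $G$ and $H$ is needed.
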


We argue that preserving these partial subgraph isomorphisms is sufficient for most examples in practice. Indeed, our empirical findings show that we can successfully solve both the \textsc{Exp} and \textsc{Csl} datasets, whereas a \wlone{} model obtains random performance; see \cref{tab:csl}, \cref{tab:exp}.

The next corollary follows the above and recovers Theorem 4.1 from \citet{qian2023probabilistically}. The corollary tells us that, even if there are isomorphic graphs that we risk making separable, we will maintain the isomorphism between almost all isomorphic pairs.

\begin{corollary}
    For sufficiently large $n$, for every $\varepsilon \in (0,1)$, a set $m$ of ordered virtual nodes, and $k>0$, we have that almost all pairs, in the sense of \citet{Bab+1980}, of isomorphic n-order graphs $G$ and $H$ and all permutation-invariant, \wlone{}-equivalent functions $f \colon \mathfrak{A}_n \to \Rb^d$, $d > 0$, there exists a probability mass function $p_{(\btheta,k)}$ that separates the graph $G$ and $H$ with probability at most $\varepsilon$ regarding $f$.
\end{corollary}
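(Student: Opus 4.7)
The plan is to reduce the corollary directly to part~(1) of \Cref{thm:main_sep_2} by invoking the classical result of \citet{Bab+1980} that \wlone{} identifies almost all graphs. Specifically, Babai--Erd\H{o}s--Selkow shows that the fraction of $n$-order graphs whose \wlone{} stable coloring is \emph{discrete} (every color class is a singleton) tends to $1$ as $n \to \infty$. Hence, for every $\delta > 0$ and all sufficiently large $n$, all but a $\delta$-fraction of pairs $(G,H)$ of isomorphic $n$-order graphs share a common stable coloring in which every node occupies its own color class.

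For any such pair I would set $V_G \coloneq V(G)$ and $V_H \coloneq V(H)$ in the notation of \Cref{thm:main_sep_2} (so that $d = n$ there) and $W_G = W_H = \emptyset$. Then $G[V_G]$ and $H[V_H]$ coincide with $G$ and $H$ themselves, and the hypotheses of part~(1) of the theorem are met (the ``$W$-part'' being vacuous). Applying part~(1) with the prescribed $\varepsilon$, $k$, and virtual-node set of cardinality $m$ yields a conditional probability mass function $p_{(\btheta,k)}$ such that every permutation-invariant, \wlone{}-equivalent $f$ fails to separate $G$ from $H$ with probability at least $1-\varepsilon$, i.e., separates them with probability at most $\varepsilon$, which is exactly the conclusion of the corollary.

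The main obstacle is cleanly coupling the two ``almost all'' quantifiers. Babai's statement is asymptotic in $n$, whereas the corollary's ``for sufficiently large $n$'' must hold uniformly in the remaining parameters; but since one fixes $\varepsilon$, $k$, and $m$ first and only then chooses $n$ large enough to push the Babai exceptional set below any preassigned level, no genuine uniformity issue arises, and the ``almost all'' set in the corollary can be taken to be the intersection of the Babai-generic pairs with the generic pairs produced by \Cref{thm:main_sep_2}. Everything deeper --- in particular the construction of an equivariant prior under which the sampled rewired graphs remain isomorphic with high probability, and the fact that a \wlone{}-equivalent upstream MPNN produces priors $\btheta$ that respect any isomorphism between $G$ and $H$ --- is already encapsulated in part~(1) of \Cref{thm:main_sep_2}, so the corollary reduces to bookkeeping once the Babai genericity has been set up.
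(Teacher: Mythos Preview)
Your proposal is correct and matches the paper's own proof essentially verbatim: invoke \citet{Bab+1980} to conclude that almost all graphs have a discrete \wlone{} stable coloring, set $W_G = W_H = \emptyset$, and apply part~(1) of \Cref{thm:main_sep_2}. Your additional remarks on coupling the two ``almost all'' quantifiers are more careful than the paper's two-sentence proof, but the underlying argument is identical.
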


The previous theorems show that we are preserving isomorphisms better than purely randomized approaches while being more powerful than \wlone{} since we can separate non-isomorphic graphs with a probability strictly greater than 0. We provide the proofs and examples in \cref{sec:th_expressivity}.

\section{Experimental Setup and Results}\label{sec:exp}

\begin{table}{t}
	\vspace{-2.1em}

	\caption{We compare IPR-MPNN on \texttt{QM9} with the base downstream GIN model \citep{xu2018how}, two graph rewiring techniques \citep{Gutteridge2023-wx, qian2023probabilistically}, a multi-hop MPNN \citep{Abboud2022-sl}, and the relational GIN \citep{Sch+2019}. The best-performing method is colored in \textcolor{first}{green}, the second-best in \textcolor{second}{blue}, and third in \textcolor{third}{orange}. IPR-MPNN obtains the best result on all targets, except for \textsc{mu}, where it obtains the second-best result.}\vspace{-0.7em}
	\label{tab:qm9}
	\centering
	\scalebox{0.65}{
		\begin{sc}
			\begin{tabular}{@{}lccccc:c}
				\toprule
				\textbf{Property} & GIN [\citeyear{xu2018how}] & R-GIN+FA [\citeyear{Sch+2019}]                                       & SPN [\citeyear{Abboud2022-sl}]                                           & Drew-GIN [\citeyear{Gutteridge2023-wx}]                                       & PR-MPNN [\citeyear{qian2023probabilistically}]                                        & IPR-MPNN                                         \\
				\midrule
				mu               & 2.64\scriptsize{$\pm0.01$} & 2.54\scriptsize{$\pm0.09$}                    & \textcolor{third}{2.32\scriptsize{$\pm0.28$}}  & \textcolor{first}{1.93\scriptsize{$\pm0.06$}}  & \textcolor{second}{1.99\scriptsize{$\pm0.02$}} & \textcolor{second}{2.01 \scriptsize{$\pm 0.01$}} \\

				alpha            & 7.67\scriptsize{$\pm0.16$} & 2.28\scriptsize{$\pm0.04$}                    & \textcolor{third}{1.77\scriptsize{$\pm0.09$}}  & \textcolor{second}{1.63\scriptsize{$\pm0.03$}} & 2.28\scriptsize{$\pm0.06$}                     & \textcolor{first}{1.36 \scriptsize{$\pm 0.01$}}  \\

				HOMO             & 1.70\scriptsize{$\pm0.02$} & 1.26\scriptsize{$\pm0.02$}                    & 1.26\scriptsize{$\pm0.09$}                     & \textcolor{third}{1.16\scriptsize{$\pm0.01$}}  & \textcolor{second}{1.14\scriptsize{$\pm0.01$}} & \textcolor{first}{1.07 \scriptsize{$\pm 0.03$}}  \\

				LUMO             & 3.05\scriptsize{$\pm0.01$} & 1.34\scriptsize{$\pm0.04$}                    & \textcolor{third}{1.19\scriptsize{$\pm0.05$}}  & \textcolor{second}{1.13\scriptsize{$\pm0.02$}} & \textcolor{second}{1.12\scriptsize{$\pm0.01$}} & \textcolor{first}{1.03 \scriptsize{$\pm 0.07$}}  \\

				gap              & 3.37\scriptsize{$\pm0.03$} & 1.96\scriptsize{$\pm0.04$}                    & 1.89\scriptsize{$\pm0.11$}                     & \textcolor{third}{1.74\scriptsize{$\pm0.02$}}  & \textcolor{second}{1.70\scriptsize{$\pm0.01$}} & \textcolor{first}{1.61 \scriptsize{$\pm 0.08$}}  \\

				R2               &23.35\scriptsize{$\pm1.08$} & 12.61\scriptsize{$\pm0.37$}                   & 10.66\scriptsize{$\pm0.40$}                    & \textcolor{second}{9.39\scriptsize{$\pm0.13$}} & \textcolor{third}{10.41\scriptsize{$\pm0.35$}} & \textcolor{first}{8.17 \scriptsize{$\pm 0.53$}}  \\

				ZPVE             & 66.87\scriptsize{$\pm1.45$} & 5.03\scriptsize{$\pm0.36$}                    & \textcolor{third}{2.77\scriptsize{$\pm0.17$}}  & \textcolor{second}{2.73\scriptsize{$\pm0.19$}} & 4.73\scriptsize{$\pm0.08$}                     & \textcolor{first}{1.96 \scriptsize{$\pm 0.07$}}  \\

				U0               & 21.48\scriptsize{$\pm0.17$} & 2.21\scriptsize{$\pm0.12$}                    & \textcolor{third}{1.12\scriptsize{$\pm0.13$}}  & \textcolor{second}{1.01\scriptsize{$\pm0.09$}} & 2.23\scriptsize{$\pm0.13$}                     & \textcolor{first}{0.74 \scriptsize{$\pm 0.11$}}  \\

				U                & 21.59\scriptsize{$\pm0.30$} & 2.32\scriptsize{$\pm0.18$}                    & \textcolor{third}{1.03\scriptsize{$\pm0.09$}}  & \textcolor{second}{0.99\scriptsize{$\pm0.08$}} & 2.31\scriptsize{$\pm0.06$}                     & \textcolor{first}{0.79 \scriptsize{$\pm 0.12$}}  \\

				H               & 21.96\scriptsize{$\pm1.24$}  & \textcolor{third}{2.26\scriptsize{$\pm0.19$}} & \textcolor{second}{1.05\scriptsize{$\pm0.04$}} & \textcolor{second}{1.06\scriptsize{$\pm0.09$}} & 2.66 \scriptsize{$\pm0.01$}                    & \textcolor{first}{0.75 \scriptsize{$\pm 0.14$}}  \\

				G               & 19.53\scriptsize{$\pm0.47$} & 2.04\scriptsize{$\pm0.24$}                    & \textcolor{second}{0.97\scriptsize{$\pm0.06$}} & \textcolor{third}{1.06\scriptsize{$\pm0.14$}}  & 2.24\scriptsize{$\pm0.01$}                     & \textcolor{first}{0.62 \scriptsize{$\pm 0.13$}}  \\

				Cv              & 7.34\scriptsize{$\pm0.06$}  & 1.86\scriptsize{$\pm0.03$}                    & \textcolor{third}{1.36\scriptsize{$\pm0.06$}}  & \textcolor{second}{1.24\scriptsize{$\pm0.02$}} & 1.44\scriptsize{$\pm0.01$}                     & \textcolor{first}{1.03 \scriptsize{$\pm 0.04$}}  \\

				Omega           &0.60\scriptsize{$\pm0.03$}  & 0.80\scriptsize{$\pm0.04$}                    & 0.57\scriptsize{$\pm0.04$}                     & \textcolor{third}{0.55\scriptsize{$\pm0.01$}}  & \textcolor{second}{0.48\scriptsize{$\pm0.00$}} & \textcolor{first}{0.45 \scriptsize{$\pm 0.03$}}  \\ \bottomrule
			\end{tabular}
		\end{sc}
	}
\end{table}

To empirically validate the effectiveness of our IPR-MPNN framework, we conducted a series of experiments on both synthetic and real-world molecular datasets, answering the following research questions. An open repository of our code can be accessed at \url{https://github.com/chendiqian/IPR-MPNN}.

\begin{table}
	\caption{Results on the \textsc{Peptides} and \textsc{PCQM-Contact} datasets from the long-range graph benchmark \citep{Dwivedi2022-vv}. For \textsc{PCQM-Contact}, we use both the initially proposed evaluation methodology (\textsc{PCQM$^{(1)}$}) and the filtering methodologies proposed in \citet{wdtgg} (\textsc{PCQM$^{(2)}$} for filtering and \textsc{PCQM$^{(3)}$} for extended filtering). \textcolor{first}{Green} is the best model, \textcolor{second}{blue} is the second, and \textcolor{third}{red} the third. IPR-MPNNs obtain the best predictive performance on all datasets.}
	\label{tab:peptidespcqm}
	\centering
    \resizebox{0.88\linewidth}{!}{%
        \begin{sc}
		\begin{tabular}{lccccc}
			\toprule
			\textbf{Model}                                    & Peptides-func $\uparrow$                         & Peptides-struct $\downarrow$                     & PCQM$^{(1)}$ $\uparrow$ & PCQM$^{(2)}$$\uparrow$ & PCQM$^{(3)}$ $\uparrow$        \\
			\midrule
			GINE [\citeyear{xu2018how,wdtgg}]              & 0.6621\scriptsize$\pm0.0067$                     & 0.2473\scriptsize$\pm0.0017$                     & \textcolor{third}{0.3509\scriptsize$\pm0.0006$}     & \textcolor{second}{0.3725\scriptsize$\pm0.0006$} &  \textcolor{third}{0.4617\scriptsize$\pm0.0005$}  \\
			GCN [\citeyear{Kip+2017,wdtgg}]                & 0.6860\scriptsize$\pm0.0050$                     & 0.2460\scriptsize$\pm0.0007$  & 0.3424\scriptsize$\pm0.0007$     & \textcolor{third}{0.3631\scriptsize$\pm0.0006$} & 0.4526\scriptsize$\pm0.0006$\\
			DRew [\citeyear{Gutteridge2023-wx}]            & 0.7150\scriptsize$\pm0.0044$  & 0.2536\scriptsize$\pm0.0015$                     & 0.3444\scriptsize$\pm0.0017$     & - & -\\
			PR-MPNN [\citeyear{qian2023probabilistically}] & 0.6825\scriptsize$\pm0.0086$                     & 0.2477\scriptsize$\pm0.0005$                     & -      & -     &  - \\
			AMP [\citeyear{errica2023adaptive}]            & 0.7163\scriptsize$\pm0.0058$ & \textcolor{second}{0.2431\scriptsize$\pm0.0004$} & -  & - & - \\
			NBA [\citeyear{park2023nonbacktracking}]       & 0.7207\scriptsize$\pm0.0028$  & 0.2472\scriptsize$\pm0.0008$                     & - & - & -\\
                GatedGCN+PE+VN$_G$ [\citeyear{southern2024understanding}] & 0.6822\scriptsize$\pm0.0052$  & 0.2458\scriptsize$\pm0.0006$    & - & - & -\\
                S2GCN [\citeyear{geisler2024spatio}] & \textcolor{second}{0.7275\scriptsize$\pm0.0066$}  & 0.2467\scriptsize$\pm0.0019$                     & - & - & -\\
                S2GCN+PE [\citeyear{geisler2024spatio}] & \textcolor{first}{0.7311\scriptsize$\pm0.0066$}  & 0.2447\scriptsize$\pm0.0007$                     & - & - & -\\
            \midrule
            GPS [\citeyear{Rampasek2022-uc}]               & 0.6535\scriptsize$\pm0.0041$                     & 0.2509\scriptsize$\pm0.0014$  & 0.3498\scriptsize$\pm0.0005$     & \textcolor{second}{0.3722\scriptsize$\pm0.0005$} & \textcolor{second}{0.4703\scriptsize$\pm0.0014$} \\
            Exphormer [\citeyear{Shi+2023}] & 0.6527\scriptsize$\pm0.0043$ & 0.2481\scriptsize$\pm0.0007$ & \textcolor{second}{0.3637\scriptsize$\pm0.0020$}  & - & - \\
            GRIT [\citeyear{ma2023grit}]            & 0.6988\scriptsize$\pm0.0082$ & 0.2460\scriptsize$\pm0.0012$ & -  & - & - \\
            Graph MLP-Mixer [\citeyear{He2022-wq}]            & 0.6970\scriptsize$\pm0.0080$ & 0.2475\scriptsize$\pm0.0015$ & -  & - & - \\
            Graph ViT [\citeyear{He2022-wq}]            & 0.6942\scriptsize$\pm0.0075$ & \textcolor{third}{0.2449\scriptsize$\pm0.0016$} & -  & - & - \\
            \midrule
			IPR-MPNN (ours)                          & \textcolor{third}{0.7210\scriptsize$\pm0.0039$}  & \textcolor{first}{0.2422\scriptsize$\pm0.0007$}  & \textcolor{first}{0.3670\scriptsize$\pm0.0082$}  & \textcolor{first}{0.3846\scriptsize$\pm0.0047$}     & \textcolor{first}{0.4756\scriptsize$\pm0.0035$} \\
			\bottomrule
		\end{tabular}
    \end{sc}
	}
\end{table}

\begin{description}
\item[Q1] Do IPR-MPNNs alleviate over-squashing and under-reaching?
\item[Q2] Do IPR-MPNNs demonstrate enhanced expressivity compared to MPNNs?
\item[Q3] How do IPR-MPNNs compare in predictive performance on molecular datasets against other rewiring methods and graph transformers?
\item[Q4] Does the lower theoretical complexity of IPR-MPNNs translate to faster runtimes in practice?
\end{description}

\paragraph{Datasets, Experimental Results, and Discussion} 
To address \textbf{Q1}, we investigate whether our method alleviates over-squashing and under-reaching by experimenting on \textsc{Trees-NeighboursMatch} \citep{Alon2020} and \textsc{Trees-LeafCount} \citep{qian2023probabilistically}. On \textsc{Trees-LeafCount} with a tree depth of four, we obtain perfect performance on the test dataset with a one-layer downstream network, indicating we can alleviate under-reaching. Furthermore, on \textsc{Trees-NeighboursMatch}, our method obtains perfect performance to a depth up to six, effectively alleviating over-squashing, as shown in \Cref{fig:match}. To quantitatively assess whether over-squashing is mitigated in real-world scenarios, we computed the average layer-wise sensitivity \citep{Xu+2018, Di_Giovanni2023-up, errica2023adaptive} between the most distant nodes in graphs from the \textsc{Zinc} dataset and compared these results with those from the baseline GINE model. Specifically, we compute the logarithm of the symmetric sensitivity between the most distant nodes $u,v$ as $\log \left( \left| \sfrac{\partial \mathbf{h}^l_v}{\partial \mathbf{h}^k_u} + \sfrac{\partial \mathbf{h}^l_u}{\partial \mathbf{h}^k_v} \right| \right)$, where $k$ to $l$ represent the intermediate layers.
We show that IPR-MPNNs maintain a high layer-wise sensitivity compared to the base model, as seen in \Cref{fig:model_sensitivity}, implying that they can successfully account for long-range relationships, even with multiple stacked layers. Lastly, we measured the average total effective resistance \citep{black2023understanding} of five molecular datasets before and after rewiring, showing in \Cref{fig:effective_resistance} that IPR-MPNNs are successfully improving connectivity by reducing the average total effective resistance of all evaluated datasets.

For \textbf{Q2}, we conduct experiments on the \textsc{Exp} \citep{Abb+2020} and \textsc{Csl} \citep{Mur+2019b} datasets to evaluate the expressiveness of IPR-MPNNs. The results, as detailed in \Cref{tab:exp} and \Cref{tab:csl}, demonstrate that our IPR-MPNN framework handles these datasets effectively and exhibits improved expressiveness over the base \wlone{}-equivalent GIN model.

For answering \textbf{Q3}, we utilize several real-world molecular datasets---\textsc{QM9} \citep{Ham+2017}, \textsc{Zink 12k} \citep{zinc1}, \textsc{OGB-Molhiv} \citep{hu2020ogb}, \textsc{TUDataset} \citep{Mor+2020}, and datasets from the \textsc{long-range graph benchmark} \citep{Dwivedi2022-vv}, namely \textsc{Peptides} and \textsc{PCQM-Contact}. Our results demonstrate that IPR-MPNNs effectively account for long-range relationships, achieving state-of-the-art performance on the \textsc{Peptides} and \textsc{PCQM-Contact} datasets, as detailed in \Cref{tab:peptidespcqm}. Notably, on the \textsc{PCQM-Contact} link prediction tasks, IPR-MPNNs outperform all other candidates across three measurement metrics outlined in \citet{wdtgg}. For \textsc{Qm9}, we show in \Cref{tab:qm9} that IPR-MPNNs greatly outperform similar methods, obtaining the best results on 12 of 13 target properties. On \textsc{Zinc} and \textsc{OGB-Molhiv}, we outperform similar MPNNs and graph transformers, namely GPS \cite{Rampasek2022-uc} and SAT \citep{Chen22a}, obtaining state-of-the-art results; see \Cref{tab:zincmolhiv}. For the \textsc{TUDataset} collection, we achieve the best results on four of the five molecular datasets; see \Cref{tab:tud}.

\begin{table}
    \caption{IPR-MPNN training, inference (seconds per epoch), and memory consumption statistics in comparison to the base GINE model \citep{xu2018how}, the GPS graph transformer \citep{Rampasek2022-uc} and the Drew model \citep{Gutteridge2023-wx} on the \textsc{Peptides-struct} dataset \citep{Dwivedi2022-vv}. Our model has almost the same computation and memory efficiency as the base GINE model while being twice as fast and significantly more memory efficient when compared to GPS.}
	\label{tab:main_runtime}
	\begin{center}
		\begin{small}
			\begin{sc}
				\begin{tabular}{lcccc}
                    \toprule
                               & GINE & IPR-MPNN & GPS & Drew \\
                    \midrule
                    \# Par.                & $503k$ & $536k$ & $558k$ & $522k$ \\
                    Trn s/ep.           & 2.68\scriptsize$\pm$0.01 & 2.98\scriptsize$\pm$0.02 & 7.81\scriptsize$\pm$0.32 & 3.20\scriptsize$\pm$0.03 \\
                    Val s/ep.             & 0.21\scriptsize$\pm$0.00 & 0.27\scriptsize$\pm$0.00 & 0.58\scriptsize$\pm$0.04 & 0.36\scriptsize$\pm$0.00 \\
                    Mem.                 & 1.7GB & 1.9GB & 22.2GB & 1.8GB \\
                    \bottomrule
                    \end{tabular}
			\end{sc}
		\end{small}
	\end{center}
 	\vskip -0.3in
\end{table}

Finally, to address \textbf{Q4}, we evaluate the computation time and memory usage of IPR-MPNNs in comparison with the GPS graph transformer \citep{Rampasek2022-uc} on \textsc{Peptides-struct} and extend our analysis to include PR-MPNNs \citep{qian2023probabilistically}, SAT \citep{Chen22a}, and GPS on the \textsc{Zinc} dataset. The results in \Cref{tab:runtime_full,tab:main_runtime} demonstrate that IPR-MPNNs adhere to their theoretical linear runtime complexity in practice. We observed a notable speedup in training and validation times per epoch while reducing the memory footprint by a large margin compared to the two mentioned transformers. This efficiency underscores the practical advantages of IPR-MPNNs in computational speed and resource utilization.

\begin{figure}[h!]
    \begin{minipage}{0.5\textwidth}
        \centering
        \includegraphics[width=0.9\textwidth]{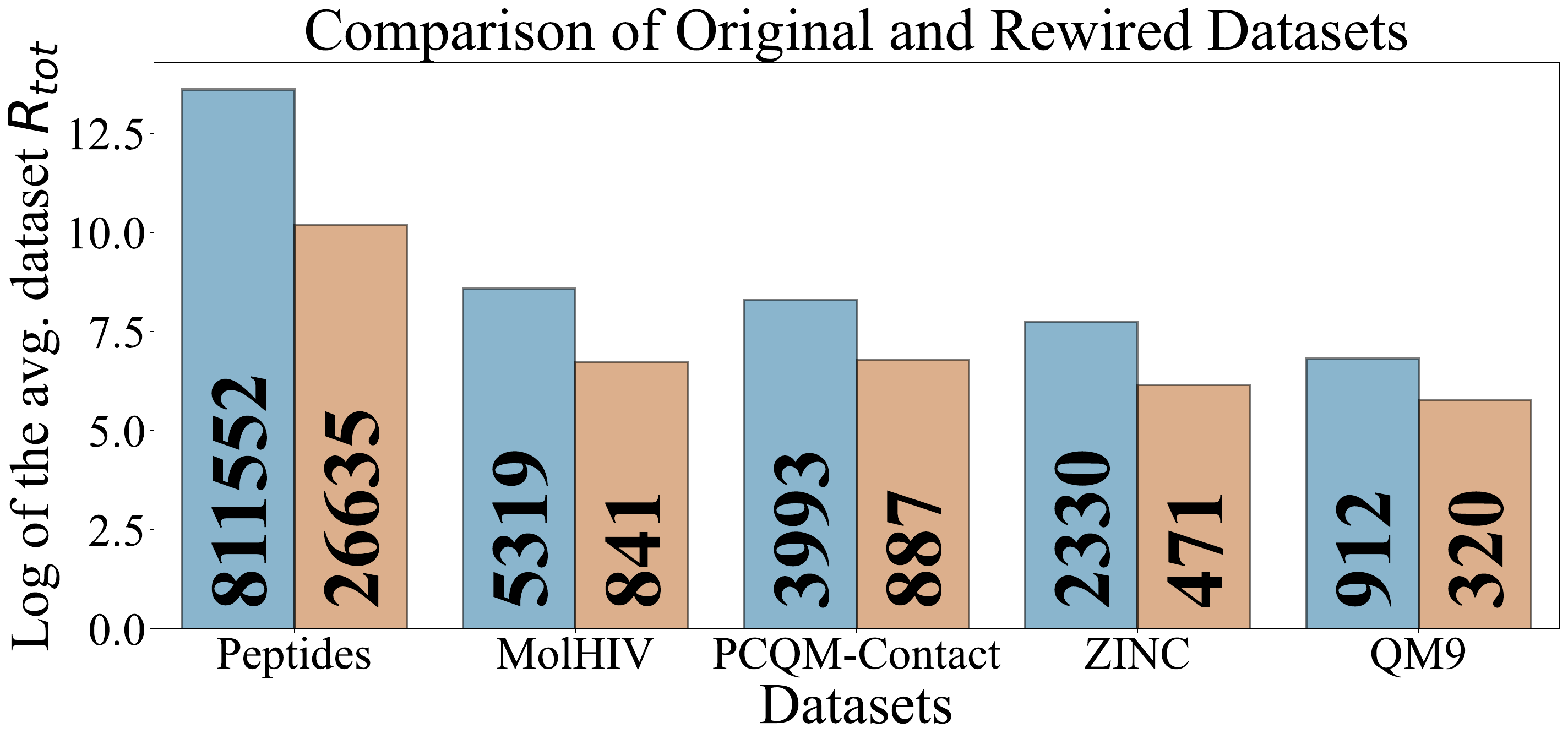}
        \caption{We compute the log of total effective resistance \citep{black2023understanding} of five molecular datasets before and after rewiring the graphs using virtual nodes. Our rewiring technique consistently lowers the total effective resistance, indicating a better information flow on all of the datasets.}
        \label{fig:effective_resistance}
    \end{minipage}
    \hfill
    \begin{minipage}{0.45\textwidth}
        \centering
        \captionof{table}{Results on the \textsc{Zinc} \citep{zinc1} and \textsc{OGBG-Molhiv} \citep{hu2020ogb} datasets. \textcolor{first}{Green} is the best model, \textcolor{second}{blue} is the second, and \textcolor{third}{red} the third. The IPR-MPNN outperforms both SAT and GPS on \textsc{Zinc}, while obtaining the same performance as GPS on \textsc{OGB-Molhiv}.}
        \label{tab:zincmolhiv}
        \begin{small}
        \resizebox{1\linewidth}{!}{%
        \begin{sc}
        \begin{tabular}{l cc}
            \toprule
            Model                     & ZINC (12K) $\downarrow$   & OGB-Molhiv $\uparrow$        \\
            \midrule
            GINE [\citeyear{xu2018how,wdtgg}]              & 0.101\scriptsize$\pm0.004$        &    0.764\scriptsize$\pm0.010$         \\
            PR-MPNN [\citeyear{qian2023probabilistically}] & 0.084\scriptsize$\pm 0.002$       &        \textcolor{second}{0.795\scriptsize$\pm0.009$} \\
            GPS [\citeyear{Rampasek2022-uc}]               & \textcolor{second}{0.070\scriptsize$\pm0.004$}  &  \textcolor{third}{0.788\scriptsize$\pm0.010$}\\
            K-SG SAT [\citeyear{Chen22a}]               & 0.095\scriptsize$\pm0.002$  &  0.613\scriptsize$\pm0.010$\\
            K-ST SAT [\citeyear{Chen22a}]               & 0.115\scriptsize$\pm0.005$   &  0.625\scriptsize$\pm0.039$\\
            Graph MLP-Mixer [\citeyear{He2022-wq}]               & \textcolor{third}{0.073\scriptsize$\pm0.001$}   &  \textcolor{first}{0.799\scriptsize$\pm0.010$} \\
            Graph ViT [\citeyear{He2022-wq}]               & 0.085\scriptsize$\pm0.005$   &  0.779\scriptsize$\pm0.015$ \\
            \midrule
            IPR-MPNN (ours)                                & \textcolor{first}{0.067\scriptsize$\pm 0.001$}  &  \textcolor{third}{0.788\scriptsize$\pm0.006$}\\
            \bottomrule
        \end{tabular}
        \end{sc}
        }
        \end{small}
    \end{minipage}
\end{figure}

\section{Conclusion}

Here, we introduced implicit probabilistically rewired message-passing neural networks (IPR-MPNNs), a graph-rewiring approach leveraging recent progress in end-to-end differentiable sampling. IPR-MPNNs show drastically improved running times and memory usage efficiency over graph transformers and competing rewiring-based architectures due to IPR-MPNNs' ability to circumvent comparing every pair of nodes and significantly outperforming them on real-world datasets while effectively addressing over-squashing and overreaching. Hence, IPR-MPNNs represent a significant step towards designing scalable, adaptable MPNNs, making them more reliable and expressive. 

\subsubsection*{Acknowledgments}
CQ and CM are partially funded by a DFG (German Research Foundation) Emmy Noether grant (468502433) and RWTH Junior Principal Investigator Fellowship under Germany's Excellence Strategy. AM and MN acknowledge DFG funding under Germany's Excellence Strategy—EXC 2075 – 390740016, the support of the Stuttgart Center for Simulation Science (SimTech), and the International Max Planck Research School for Intelligent Systems (IMPRS-IS). 

\clearpage

\clearpage

\bibliographystyle{abbrvnat}
\bibliography{references}

\begin{thebibliography}{141}
\providecommand{\natexlab}[1]{#1}
\providecommand{\url}[1]{\texttt{#1}}
\expandafter\ifx\csname urlstyle\endcsname\relax
  \providecommand{\doi}[1]{doi: #1}\else
  \providecommand{\doi}{doi: \begingroup \urlstyle{rm}\Url}\fi

\bibitem[Abboud et~al.(2020)Abboud, Ceylan, Grohe, and Lukasiewicz]{Abb+2020}
R.~Abboud, I.~I. Ceylan, M.~Grohe, and T.~Lukasiewicz.
\newblock The surprising power of graph neural networks with random node initialization.
\newblock \emph{arXiv preprint}, 2020.

\bibitem[Abboud et~al.(2022)Abboud, Dimitrov, and Ceylan]{Abboud2022-sl}
R.~Abboud, R.~Dimitrov, and I.~I. Ceylan.
\newblock Shortest path networks for graph property prediction.
\newblock In \emph{Learning on Graphs Conference}, 2022.

\bibitem[Abu-El-Haija et~al.(2019)Abu-El-Haija, Perozzi, Kapoor, Alipourfard, Lerman, Harutyunyan, Ver~Steeg, and Galstyan]{Hai+2019}
S.~Abu-El-Haija, B.~Perozzi, A.~Kapoor, N.~Alipourfard, K.~Lerman, H.~Harutyunyan, G.~Ver~Steeg, and A.~Galstyan.
\newblock Mixhop: Higher-order graph convolutional architectures via sparsified neighborhood mixing.
\newblock In \emph{International Conference on Machine Learning}, 2019.

\bibitem[Ahmed et~al.(2023)Ahmed, Zeng, Niepert, and Van~den Broeck]{ahmed2022simple}
K.~Ahmed, Z.~Zeng, M.~Niepert, and G.~Van~den Broeck.
\newblock Simple: A gradient estimator for k-subset sampling.
\newblock In \emph{International Conference on Learning Representations}, 2023.

\bibitem[Alon and Yahav(2021)]{Alon2020}
U.~Alon and E.~Yahav.
\newblock On the bottleneck of graph neural networks and its practical implications.
\newblock In \emph{International Conference on Learning Representations}, 2021.

\bibitem[Arnaiz-Rodr{\'\i}guez et~al.(2022)Arnaiz-Rodr{\'\i}guez, Begga, Escolano, and Oliver]{arnaiz2022diffwire}
A.~Arnaiz-Rodr{\'\i}guez, A.~Begga, F.~Escolano, and N.~Oliver.
\newblock Diffwire: Inductive graph rewiring via the lovasz bound.
\newblock \emph{arXiv preprint}, 2022.

\bibitem[Arvind et~al.(2015)Arvind, K{\"{o}}bler, Rattan, and Verbitsky]{Arv+2015}
V.~Arvind, J.~K{\"{o}}bler, G.~Rattan, and O.~Verbitsky.
\newblock On the power of color refinement.
\newblock In \emph{International Symposium on Fundamentals of Computation Theory}, 2015.

\bibitem[Babai and Kucera(1979)]{Bab+1979}
L.~Babai and L.~Kucera.
\newblock Canonical labelling of graphs in linear average time.
\newblock In \emph{Annual Symposium on Foundations of Computer Science (sfcs 1979)}, 1979.

\bibitem[Babai et~al.(1980)Babai, Erdos, and Selkow]{Bab+1980}
L.~Babai, P.~Erdos, and S.~M. Selkow.
\newblock Random graph isomorphism.
\newblock \emph{SIAM Journal on computing}, 9\penalty0 (3):\penalty0 628--635, 1980.

\bibitem[Banerjee et~al.(2022)Banerjee, Karhadkar, Wang, Alon, and Mont{\'u}far]{Banerjee2022-yd}
P.~K. Banerjee, K.~Karhadkar, Y.~G. Wang, U.~Alon, and G.~Mont{\'u}far.
\newblock Oversquashing in gnns through the lens of information contraction and graph expansion.
\newblock In \emph{Annual Allerton Conference on Communication, Control, and Computing (Allerton)}, 2022.

\bibitem[Barbero et~al.(2023)Barbero, Velingker, Saberi, Bronstein, and Di~Giovanni]{barbero2023locality}
F.~Barbero, A.~Velingker, A.~Saberi, M.~Bronstein, and F.~Di~Giovanni.
\newblock Locality-aware graph-rewiring in gnns.
\newblock \emph{arXiv preprint}, 2023.

\bibitem[Barceló et~al.(2020)Barceló, Kostylev, Monet, Pérez, Reutter, and Silva]{Barceló2020The}
P.~Barceló, E.~V. Kostylev, M.~Monet, J.~Pérez, J.~Reutter, and J.~P. Silva.
\newblock The logical expressiveness of graph neural networks.
\newblock In \emph{International Conference on Learning Representations}, 2020.

\bibitem[Baskin et~al.(1997)Baskin, Palyulin, and Zefirov]{bas+1997}
I.~I. Baskin, V.~A. Palyulin, and N.~S. Zefirov.
\newblock A neural device for searching direct correlations between structures and properties of chemical compounds.
\newblock \emph{Journal of Chemical Information and Computer Sciences}, 37\penalty0 (4):\penalty0 715--721, 1997.

\bibitem[Battaglia et~al.(2018)Battaglia, Hamrick, Bapst, Sanchez-Gonzalez, Zambaldi, Malinowski, Tacchetti, Raposo, Santoro, Faulkner, et~al.]{battaglia2018relational}
P.~W. Battaglia, J.~B. Hamrick, V.~Bapst, A.~Sanchez-Gonzalez, V.~Zambaldi, M.~Malinowski, A.~Tacchetti, D.~Raposo, A.~Santoro, R.~Faulkner, et~al.
\newblock Relational inductive biases, deep learning, and graph networks.
\newblock \emph{arXiv preprint}, 2018.

\bibitem[Black et~al.(2023)Black, Wan, Nayyeri, and Wang]{black2023understanding}
M.~Black, Z.~Wan, A.~Nayyeri, and Y.~Wang.
\newblock Understanding oversquashing in gnns through the lens of effective resistance.
\newblock In \emph{International Conference on Machine Learning}, 2023.

\bibitem[Bober et~al.(2022)Bober, Monod, Saucan, and Webster]{Bober2022-md}
J.~Bober, A.~Monod, E.~Saucan, and K.~N. Webster.
\newblock Rewiring networks for graph neural network training using discrete geometry.
\newblock \emph{arXiv preprint}, 2022.

\bibitem[Bodnar et~al.(2021)Bodnar, Frasca, Wang, Otter, Montufar, Lio, and Bronstein]{Bod+2021}
C.~Bodnar, F.~Frasca, Y.~Wang, N.~Otter, G.~F. Montufar, P.~Lio, and M.~Bronstein.
\newblock {W}eisfeiler and {L}ehman go topological: Message passing simplicial networks.
\newblock In \emph{International Conference on Machine Learning}, 2021.

\bibitem[Bouritsas et~al.(2022)Bouritsas, Frasca, Zafeiriou, and Bronstein]{botsas2020improving}
G.~Bouritsas, F.~Frasca, S.~Zafeiriou, and M.~M. Bronstein.
\newblock Improving graph neural network expressivity via subgraph isomorphism counting.
\newblock \emph{IEEE Transactions on Pattern Analysis and Machine Intelligence}, 45\penalty0 (1):\penalty0 657--668, 2022.

\bibitem[Br{\"u}el-Gabrielsson et~al.(2022)Br{\"u}el-Gabrielsson, Yurochkin, and Solomon]{bruel2022rewiring}
R.~Br{\"u}el-Gabrielsson, M.~Yurochkin, and J.~Solomon.
\newblock Rewiring with positional encodings for graph neural networks.
\newblock \emph{arXiv preprint}, 2022.

\bibitem[Bruna et~al.(2014)Bruna, Zaremba, Szlam, and LeCun]{Bru+2014}
J.~Bruna, W.~Zaremba, A.~Szlam, and Y.~LeCun.
\newblock Spectral networks and deep locally connected networks on graphs.
\newblock In \emph{International Conference on Learning Representation}, 2014.

\bibitem[Böker et~al.(2023)Böker, Levie, Huang, Villar, and Morris]{Boe+2023}
J.~Böker, R.~Levie, N.~Huang, S.~Villar, and C.~Morris.
\newblock Fine-grained expressivity of graph neural networks.
\newblock In \emph{NeurIPS}, 2023.

\bibitem[Cai et~al.(2023)Cai, Hy, Yu, and Wang]{cai2023connection}
C.~Cai, T.~S. Hy, R.~Yu, and Y.~Wang.
\newblock On the connection between {MPNN} and graph transformer.
\newblock \emph{arXiv preprint}, 2023.

\bibitem[Cai et~al.(1992)Cai, F{\"u}rer, and Immerman]{Cai+1992}
J.-Y. Cai, M.~F{\"u}rer, and N.~Immerman.
\newblock An optimal lower bound on the number of variables for graph identification.
\newblock \emph{Combinatorica}, 12\penalty0 (4):\penalty0 389--410, 1992.

\bibitem[Cappart et~al.(2023)Cappart, Ch{\'{e}}telat, Khalil, Lodi, Morris, and Veli\v{c}kovi\'{c}]{Cap+2023}
Q.~Cappart, D.~Ch{\'{e}}telat, E.~B. Khalil, A.~Lodi, C.~Morris, and P.~Veli\v{c}kovi\'{c}.
\newblock Combinatorial optimization and reasoning with graph neural networks.
\newblock \emph{Journal of Machine Learning Research}, 24\penalty0 (130):\penalty0 1--61, 2023.

\bibitem[Chen et~al.(2022{\natexlab{a}})Chen, O’Bray, and Borgwardt]{Chen22a}
D.~Chen, L.~O’Bray, and K.~Borgwardt.
\newblock Structure-aware transformer for graph representation learning.
\newblock In \emph{International Conference on Machine Learning}, 2022{\natexlab{a}}.

\bibitem[Chen et~al.(2022{\natexlab{b}})Chen, Gao, Li, and He]{chen2022nagphormer}
J.~Chen, K.~Gao, G.~Li, and K.~He.
\newblock Nagphormer: A tokenized graph transformer for node classification in large graphs.
\newblock \emph{arXiv preprint}, 2022{\natexlab{b}}.

\bibitem[Chen et~al.(2020)Chen, Wu, and Zaki]{chen2020iterative}
Y.~Chen, L.~Wu, and M.~Zaki.
\newblock Iterative deep graph learning for graph neural networks: Better and robust node embeddings.
\newblock \emph{Advances in Neural Information Processing Systems}, 2020.

\bibitem[Craven et~al.(1998)Craven, DiPasquo, Freitag, McCallum, Mitchell, Nigam, and Slattery]{webkb1998}
M.~Craven, D.~DiPasquo, D.~Freitag, A.~McCallum, T.~Mitchell, K.~Nigam, and S.~Slattery.
\newblock Learning to extract symbolic knowledge from the world wide web.
\newblock \emph{AAAI/IAAI}, 3\penalty0 (3.6):\penalty0 2, 1998.

\bibitem[de~Haan et~al.(2020)de~Haan, Cohen, and Welling]{de2020natural}
P.~de~Haan, T.~S. Cohen, and M.~Welling.
\newblock Natural graph networks.
\newblock \emph{Advances in Neural Information Processing Systems}, 2020.

\bibitem[Deac et~al.(2022)Deac, Lackenby, and Veli{\v{c}}kovi{\'c}]{Deac2022-lu}
A.~Deac, M.~Lackenby, and P.~Veli{\v{c}}kovi{\'c}.
\newblock Expander graph propagation.
\newblock In \emph{Learning on Graphs Conference}, 2022.

\bibitem[Defferrard et~al.(2016)Defferrard, Bresson, and Vandergheynst]{Defferrard2016}
M.~Defferrard, X.~Bresson, and P.~Vandergheynst.
\newblock Convolutional neural networks on graphs with fast localized spectral filtering.
\newblock In \emph{Advances in Neural Information Processing Systems}, 2016.

\bibitem[Di~Giovanni et~al.(2023)Di~Giovanni, Giusti, Barbero, Luise, Lio, and Bronstein]{Di_Giovanni2023-up}
F.~Di~Giovanni, L.~Giusti, F.~Barbero, G.~Luise, P.~Lio, and M.~M. Bronstein.
\newblock On over-squashing in message passing neural networks: The impact of width, depth, and topology.
\newblock In \emph{International Conference on Machine Learning}, 2023.

\bibitem[Duvenaud et~al.(2015)Duvenaud, Maclaurin, Aguilera{-}Iparraguirre, G{\'{o}}mez{-}Bombarelli, Hirzel, Aspuru{-}Guzik, and Adams]{Duv+2015}
D.~Duvenaud, D.~Maclaurin, J.~Aguilera{-}Iparraguirre, R.~G{\'{o}}mez{-}Bombarelli, T.~Hirzel, A.~Aspuru{-}Guzik, and R.~P. Adams.
\newblock Convolutional networks on graphs for learning molecular fingerprints.
\newblock In \emph{Advances in Neural Information Processing Systems}, 2015.

\bibitem[Dwivedi and Bresson(2020)]{Dwivedi2020-cd}
V.~P. Dwivedi and X.~Bresson.
\newblock A generalization of transformer networks to graphs.
\newblock \emph{ArXiv preprint}, 2020.

\bibitem[Dwivedi et~al.(2022{\natexlab{a}})Dwivedi, Luu, Laurent, Bengio, and Bresson]{dwivedi2022graph}
V.~P. Dwivedi, A.~T. Luu, T.~Laurent, Y.~Bengio, and X.~Bresson.
\newblock Graph neural networks with learnable structural and positional representations.
\newblock In \emph{International Conference on Learning Representations}, 2022{\natexlab{a}}.

\bibitem[Dwivedi et~al.(2022{\natexlab{b}})Dwivedi, Ramp{\'a}{\v{s}}ek, Galkin, Parviz, Wolf, Luu, and Beaini]{Dwivedi2022-vv}
V.~P. Dwivedi, L.~Ramp{\'a}{\v{s}}ek, M.~Galkin, A.~Parviz, G.~Wolf, A.~T. Luu, and D.~Beaini.
\newblock Long range graph benchmark.
\newblock \emph{Advances in Neural Information Processing Systems}, 2022{\natexlab{b}}.

\bibitem[Easley et~al.(2012)Easley, Kleinberg, et~al.]{Eas+2010}
D.~Easley, J.~Kleinberg, et~al.
\newblock Networks, crowds, and markets.
\newblock \emph{Cambridge Books}, 2012.

\bibitem[Errica et~al.(2023)Errica, Christiansen, Zaverkin, Maruyama, Niepert, and Alesiani]{errica2023adaptive}
F.~Errica, H.~Christiansen, V.~Zaverkin, T.~Maruyama, M.~Niepert, and F.~Alesiani.
\newblock Adaptive message passing: A general framework to mitigate oversmoothing, oversquashing, and underreaching.
\newblock \emph{arXiv preprint}, 2023.

\bibitem[Fatemi et~al.(2021)Fatemi, El~Asri, and Kazemi]{fatemi2021slaps}
B.~Fatemi, L.~El~Asri, and S.~M. Kazemi.
\newblock Slaps: Self-supervision improves structure learning for graph neural networks.
\newblock \emph{Advances in Neural Information Processing Systems}, 2021.

\bibitem[Fatemi et~al.(2023)Fatemi, Abu-El-Haija, Tsitsulin, Kazemi, Zelle, Bulut, Halcrow, and Perozzi]{fatemi2023ugsl}
B.~Fatemi, S.~Abu-El-Haija, A.~Tsitsulin, M.~Kazemi, D.~Zelle, N.~Bulut, J.~Halcrow, and B.~Perozzi.
\newblock Ugsl: A unified framework for benchmarking graph structure learning.
\newblock \emph{arXiv preprint}, 2023.

\bibitem[Fey et~al.(2020)Fey, Yuen, and Weichert]{fey2020hierarchical}
M.~Fey, J.-G. Yuen, and F.~Weichert.
\newblock Hierarchical inter-message passing for learning on molecular graphs.
\newblock \emph{arXiv preprint}, 2020.

\bibitem[Franceschi et~al.(2019)Franceschi, Niepert, Pontil, and He]{franceschi2019learning}
L.~Franceschi, M.~Niepert, M.~Pontil, and X.~He.
\newblock Learning discrete structures for graph neural networks.
\newblock In \emph{International Conference on Machine Learning}, 2019.

\bibitem[Frasca et~al.(2020)Frasca, Rossi, Eynard, Chamberlain, Bronstein, and Monti]{frasca2020sign}
F.~Frasca, E.~Rossi, D.~Eynard, B.~Chamberlain, M.~Bronstein, and F.~Monti.
\newblock Sign: Scalable inception graph neural networks.
\newblock \emph{arXiv preprint}, 2020.

\bibitem[Gama et~al.(2019)Gama, Marques, Leus, and Ribeiro]{Gam+2019}
F.~Gama, A.~G. Marques, G.~Leus, and A.~Ribeiro.
\newblock Convolutional neural network architectures for signals supported on graphs.
\newblock \emph{IEEE Transactions on Signal Processing}, 67\penalty0 (4):\penalty0 1034--1049, 2019.

\bibitem[Gao and Ji(2019)]{gao2019unets}
H.~Gao and S.~Ji.
\newblock Graph {U-Nets}.
\newblock In \emph{International Conference on Machine Learning}, 2019.

\bibitem[Gasteiger et~al.(2019)Gasteiger, Wei{\ss}enberger, and G{\"u}nnemann]{Kli+2019}
J.~Gasteiger, S.~Wei{\ss}enberger, and S.~G{\"u}nnemann.
\newblock Diffusion improves graph learning.
\newblock \emph{Advances in Neural Information Processing Systems}, 32, 2019.

\bibitem[Geisler et~al.(2024)Geisler, Kosmala, Herbst, and G{\"u}nnemann]{geisler2024spatio}
S.~Geisler, A.~Kosmala, D.~Herbst, and S.~G{\"u}nnemann.
\newblock Spatio-spectral graph neural networks.
\newblock \emph{arXiv preprint}, 2024.

\bibitem[Gilmer et~al.(2017)Gilmer, Schoenholz, Riley, Vinyals, and Dahl]{Gil+2017}
J.~Gilmer, S.~S. Schoenholz, P.~F. Riley, O.~Vinyals, and G.~E. Dahl.
\newblock Neural message passing for quantum chemistry.
\newblock In \emph{International Conference on Machine Learning}, 2017.

\bibitem[Giusti et~al.(2023{\natexlab{a}})Giusti, Battiloro, Testa, Di~Lorenzo, Sardellitti, and Barbarossa]{giusti2022cell}
L.~Giusti, C.~Battiloro, L.~Testa, P.~Di~Lorenzo, S.~Sardellitti, and S.~Barbarossa.
\newblock Cell attention networks.
\newblock In \emph{International Joint Conference on Neural Networks}, 2023{\natexlab{a}}.

\bibitem[Giusti et~al.(2023{\natexlab{b}})Giusti, Reu, Ceccarelli, Bodnar, and Li{\`o}]{giusti2023cin}
L.~Giusti, T.~Reu, F.~Ceccarelli, C.~Bodnar, and P.~Li{\`o}.
\newblock Cin++: Enhancing topological message passing.
\newblock \emph{arXiv preprint}, 2023{\natexlab{b}}.

\bibitem[Goller and K{\"{u}}chler(1996)]{Gol+1996}
C.~Goller and A.~K{\"{u}}chler.
\newblock Learning task-dependent distributed representations by backpropagation through structure.
\newblock In \emph{International Conference on Neural Networks}, 1996.

\bibitem[Grohe(2017)]{Gro2017}
M.~Grohe.
\newblock \emph{Descriptive complexity, canonisation, and definable graph structure theory}.
\newblock Cambridge University Press, 2017.

\bibitem[Grohe(2021)]{Gro+2021}
M.~Grohe.
\newblock The logic of graph neural networks.
\newblock In \emph{Symposium on Logic in Computer Science}, 2021.

\bibitem[Gromiha and Selvaraj(1999)]{GROMIHA199949}
M.~M. Gromiha and S.~Selvaraj.
\newblock Importance of long-range interactions in protein folding.
\newblock \emph{Biophysical Chemistry}, 77\penalty0 (1):\penalty0 49--68, 1999.

\bibitem[Gutteridge et~al.(2023)Gutteridge, Dong, Bronstein, and Di~Giovanni]{Gutteridge2023-wx}
B.~Gutteridge, X.~Dong, M.~M. Bronstein, and F.~Di~Giovanni.
\newblock Drew: dynamically rewired message passing with delay.
\newblock In \emph{International Conference on Machine Learning}, 2023.

\bibitem[Hamilton et~al.(2017)Hamilton, Ying, and Leskovec]{Ham+2017}
W.~L. Hamilton, Z.~Ying, and J.~Leskovec.
\newblock Inductive representation learning on large graphs.
\newblock In \emph{Advances in Neural Information Processing Systems}, 2017.

\bibitem[He et~al.(2023)He, Hooi, Laurent, Perold, LeCun, and Bresson]{He2022-wq}
X.~He, B.~Hooi, T.~Laurent, A.~Perold, Y.~LeCun, and X.~Bresson.
\newblock A generalization of vit/mlp-mixer to graphs.
\newblock In \emph{International Conference on Machine Learning}, 2023.

\bibitem[Hu et~al.(2019)Hu, Zhu, Wu, Wang, and Tan]{hu2019hierarchical}
F.~Hu, Y.~Zhu, S.~Wu, L.~Wang, and T.~Tan.
\newblock Hierarchical graph convolutional networks for semi-supervised node classification.
\newblock \emph{arXiv preprint}, 2019.

\bibitem[Hu et~al.(2020)Hu, Fey, Zitnik, Dong, Ren, Liu, Catasta, and Leskovec]{hu2020ogb}
W.~Hu, M.~Fey, M.~Zitnik, Y.~Dong, H.~Ren, B.~Liu, M.~Catasta, and J.~Leskovec.
\newblock Open graph benchmark: Datasets for machine learning on graphs.
\newblock \emph{Advances in Neural Information Processing Systems}, 2020.

\bibitem[Huang et~al.(2019)Huang, Li, Li, Liu, and Li]{attpool}
J.~Huang, Z.~Li, N.~Li, S.~Liu, and G.~Li.
\newblock {AttPool:} towards hierarchical feature representation in graph convolutional networks via attention mechanism.
\newblock In \emph{IEEE/CVF International Conference on Computer Vision}, 2019.

\bibitem[Huang et~al.(2021)Huang, Zhang, Xi, Liu, and Zhou]{huang2021scaling}
Z.~Huang, S.~Zhang, C.~Xi, T.~Liu, and M.~Zhou.
\newblock Scaling up graph neural networks via graph coarsening.
\newblock In \emph{SIGKDD Conference on Knowledge Discovery \& Data Mining}, 2021.

\bibitem[Husic et~al.(2020)Husic, Charron, Lemm, Wang, P{\'e}rez, Majewski, Kr{\"a}mer, Chen, Olsson, de~Fabritiis, et~al.]{husic2020coarse}
B.~E. Husic, N.~E. Charron, D.~Lemm, J.~Wang, A.~P{\'e}rez, M.~Majewski, A.~Kr{\"a}mer, Y.~Chen, S.~Olsson, G.~de~Fabritiis, et~al.
\newblock Coarse graining molecular dynamics with graph neural networks.
\newblock \emph{The Journal of Chemical Physics}, 153\penalty0 (19), 2020.

\bibitem[Hussain et~al.(2022)Hussain, Zaki, and Subramanian]{hussain2022egt}
M.~S. Hussain, M.~J. Zaki, and D.~Subramanian.
\newblock Global self-attention as a replacement for graph convolution.
\newblock In \emph{SIGKDD Conference on Knowledge Discovery and Data Mining}, 2022.

\bibitem[Ishiguro et~al.(2019)Ishiguro, Maeda, and Koyama]{ishiguro2019graph}
K.~Ishiguro, S.-i. Maeda, and M.~Koyama.
\newblock Graph warp module: an auxiliary module for boosting the power of graph neural networks in molecular graph analysis.
\newblock \emph{arXiv preprint}, 2019.

\bibitem[Jin et~al.(2017)Jin, Coley, Barzilay, and Jaakkola]{zinc1}
W.~Jin, C.~Coley, R.~Barzilay, and T.~Jaakkola.
\newblock Predicting organic reaction outcomes with {W}eisfeiler-{L}ehman network.
\newblock \emph{Advances in Neural Information Processing Systems}, 2017.

\bibitem[Jin et~al.(2020)Jin, Ma, Liu, Tang, Wang, and Tang]{jin2020graph}
W.~Jin, Y.~Ma, X.~Liu, X.~Tang, S.~Wang, and J.~Tang.
\newblock Graph structure learning for robust graph neural networks.
\newblock \emph{arXiv preprint}, 2020.

\bibitem[Jumper et~al.(2021)Jumper, Evans, Pritzel, Green, Figurnov, Ronneberger, Tunyasuvunakool, Bates, {\v{Z}}{\'\i}dek, Potapenko, et~al.]{Jum+2021}
J.~Jumper, R.~Evans, A.~Pritzel, T.~Green, M.~Figurnov, O.~Ronneberger, K.~Tunyasuvunakool, R.~Bates, A.~{\v{Z}}{\'\i}dek, A.~Potapenko, et~al.
\newblock Highly accurate protein structure prediction with alphafold.
\newblock \emph{Nature}, 596\penalty0 (7873):\penalty0 583--589, 2021.

\bibitem[Karhadkar et~al.(2022)Karhadkar, Banerjee, and Mont{\'u}far]{Karhadkar2022-uz}
K.~Karhadkar, P.~K. Banerjee, and G.~Mont{\'u}far.
\newblock {FoSR}: First-order spectral rewiring for addressing oversquashing in gnns.
\newblock \emph{arXiv preprint}, 2022.

\bibitem[Kazi et~al.(2022)Kazi, Cosmo, Ahmadi, Navab, and Bronstein]{kazi2022differentiable}
A.~Kazi, L.~Cosmo, S.-A. Ahmadi, N.~Navab, and M.~M. Bronstein.
\newblock Differentiable graph module (dgm) for graph convolutional networks.
\newblock \emph{IEEE Transactions on Pattern Analysis and Machine Intelligence}, 45\penalty0 (2):\penalty0 1606--1617, 2022.

\bibitem[Kiefer and McKay(2020)]{Kie+2020}
S.~Kiefer and B.~D. McKay.
\newblock The iteration number of {Colour Refinement}.
\newblock In \emph{International Colloquium on Automata, Languages, and Programming}, pages 73:1--73:19, 2020.

\bibitem[Kim et~al.(2022)Kim, Nguyen, Min, Cho, Lee, Lee, and Hong]{Kim+2022}
J.~Kim, T.~D. Nguyen, S.~Min, S.~Cho, M.~Lee, H.~Lee, and S.~Hong.
\newblock Pure transformers are powerful graph learners.
\newblock \emph{arXiv preprint}, 2022.

\bibitem[Kingma and Ba(2015)]{Kin+2015}
D.~P. Kingma and J.~Ba.
\newblock Adam: {A} method for stochastic optimization.
\newblock In \emph{International Conference on Learning Representations}, 2015.

\bibitem[Kipf and Welling(2017)]{Kip+2017}
T.~N. Kipf and M.~Welling.
\newblock Semi-supervised classification with graph convolutional networks.
\newblock In \emph{International Conference on Learning Representations}, 2017.

\bibitem[Kireev(1995)]{Kir+1995}
D.~B. Kireev.
\newblock Chemnet: A novel neural network based method for graph/property mapping.
\newblock \emph{Journal of Chemical Information and Computer Sciences}, 35\penalty0 (2):\penalty0 175--180, 1995.

\bibitem[Levie et~al.(2019)Levie, Monti, Bresson, and Bronstein]{Lev+2019}
R.~Levie, F.~Monti, X.~Bresson, and M.~M. Bronstein.
\newblock {CayleyNets}: Graph convolutional neural networks with complex rational spectral filters.
\newblock \emph{{IEEE} Transactions on Signal Processing}, 67\penalty0 (1):\penalty0 97--109, 2019.

\bibitem[Li et~al.(2017)Li, Cai, and He]{li2017learning}
J.~Li, D.~Cai, and X.~He.
\newblock Learning graph-level representation for drug discovery.
\newblock \emph{arXiv preprint}, 2017.

\bibitem[Li et~al.(2020)Li, Chen, Zhang, and Tsang]{li2020gxn}
M.~Li, S.~Chen, Y.~Zhang, and I.~Tsang.
\newblock Graph cross networks with vertex infomax pooling.
\newblock \emph{Advances in Neural Information Processing Systems}, 2020.

\bibitem[Li et~al.(2023)Li, Zhou, Yao, Rong, Zhang, and Han]{li2023long}
X.~Li, Z.~Zhou, J.~Yao, Y.~Rong, L.~Zhang, and B.~Han.
\newblock Long-range neural atom learning for molecular graphs.
\newblock \emph{arXiv preprint}, 2023.

\bibitem[Liang et~al.(2021)Liang, Gurukar, and Parthasarathy]{liang2021mile}
J.~Liang, S.~Gurukar, and S.~Parthasarathy.
\newblock Mile: A multi-level framework for scalable graph embedding.
\newblock In \emph{AAAI Conference on Web and Social Media}, 2021.

\bibitem[Liu et~al.(2021)Liu, Wang, and Ji]{Liu2021-nm}
M.~Liu, Z.~Wang, and S.~Ji.
\newblock Non-local graph neural networks.
\newblock \emph{IEEE Transactions on Pattern Analysis and Machine Intelligence}, 44\penalty0 (12):\penalty0 10270--10276, 2021.

\bibitem[Liu et~al.(2022{\natexlab{a}})Liu, Wang, Wu, Chen, Guo, and Shi]{liu2022compact}
N.~Liu, X.~Wang, L.~Wu, Y.~Chen, X.~Guo, and C.~Shi.
\newblock Compact graph structure learning via mutual information compression.
\newblock In \emph{ACM Web Conference 2022}, 2022{\natexlab{a}}.

\bibitem[Liu et~al.(2022{\natexlab{b}})Liu, Zheng, Zhang, Chen, Peng, and Pan]{liu2022towards}
Y.~Liu, Y.~Zheng, D.~Zhang, H.~Chen, H.~Peng, and S.~Pan.
\newblock Towards unsupervised deep graph structure learning.
\newblock In \emph{ACM Web Conference 2022}, 2022{\natexlab{b}}.

\bibitem[Ma et~al.(2023)Ma, Lin, Lim, Romero-Soriano, Dokania, Coates, Torr, and Lim]{ma2023grit}
L.~Ma, C.~Lin, D.~Lim, A.~Romero-Soriano, P.~K. Dokania, M.~Coates, P.~Torr, and S.-N. Lim.
\newblock Graph inductive biases in transformers without message passing.
\newblock In \emph{International Conference on Machine Learning}, 2023.

\bibitem[Maron et~al.(2019{\natexlab{a}})Maron, Ben{-}Hamu, Serviansky, and Lipman]{Mar+2019}
H.~Maron, H.~Ben{-}Hamu, H.~Serviansky, and Y.~Lipman.
\newblock Provably powerful graph networks.
\newblock In \emph{Advances in Neural Information Processing Systems}, 2019{\natexlab{a}}.

\bibitem[Maron et~al.(2019{\natexlab{b}})Maron, Ben{-}Hamu, Shamir, and Lipman]{Mar+2019c}
H.~Maron, H.~Ben{-}Hamu, N.~Shamir, and Y.~Lipman.
\newblock Invariant and equivariant graph networks.
\newblock In \emph{International Conference on Learning Representations}, 2019{\natexlab{b}}.

\bibitem[Merkwirth and Lengauer(2005)]{Mer+2005}
C.~Merkwirth and T.~Lengauer.
\newblock Automatic generation of complementary descriptors with molecular graph networks.
\newblock \emph{Journal of Chemical Information and Modeling}, 45\penalty0 (5):\penalty0 1159--1168, 2005.

\bibitem[Mialon et~al.(2021)Mialon, Chen, Selosse, and Mairal]{mialon2021graphit}
G.~Mialon, D.~Chen, M.~Selosse, and J.~Mairal.
\newblock Graphit: Encoding graph structure in transformers.
\newblock \emph{arXiv preprint}, 2021.

\bibitem[Micheli(2009)]{mic+2009}
A.~Micheli.
\newblock Neural network for graphs: {A} contextual constructive approach.
\newblock \emph{{IEEE} Transactions on Neural Networks}, 20\penalty0 (3):\penalty0 498--511, 2009.

\bibitem[Micheli and Sestito(2005)]{mic+2005}
A.~Micheli and A.~S. Sestito.
\newblock A new neural network model for contextual processing of graphs.
\newblock In \emph{Italian Workshop on Neural Nets Neural Nets and International Workshop on Natural and Artificial Immune Systems}, 2005.

\bibitem[Minervini et~al.(2023)Minervini, Franceschi, and Niepert]{minervini23aimle}
P.~Minervini, L.~Franceschi, and M.~Niepert.
\newblock Adaptive perturbation-based gradient estimation for discrete latent variable models.
\newblock In \emph{{AAAI}}, 2023.

\bibitem[Monti et~al.(2017)Monti, Boscaini, Masci, Rodol{\`{a}}, Svoboda, and Bronstein]{Mon+2017}
F.~Monti, D.~Boscaini, J.~Masci, E.~Rodol{\`{a}}, J.~Svoboda, and M.~M. Bronstein.
\newblock Geometric deep learning on graphs and manifolds using mixture model {CNNs}.
\newblock In \emph{{IEEE} Conference on Computer Vision and Pattern Recognition}, 2017.

\bibitem[Morris et~al.(2019)Morris, Ritzert, Fey, Hamilton, Lenssen, Rattan, and Grohe]{Mor+2019}
C.~Morris, M.~Ritzert, M.~Fey, W.~L. Hamilton, J.~E. Lenssen, G.~Rattan, and M.~Grohe.
\newblock {W}eisfeiler and {L}eman go neural: Higher-order graph neural networks.
\newblock In \emph{AAAI Conference on Artificial Intelligence}, 2019.

\bibitem[Morris et~al.(2020{\natexlab{a}})Morris, Kriege, Bause, Kersting, Mutzel, and Neumann]{Mor+2020}
C.~Morris, N.~M. Kriege, F.~Bause, K.~Kersting, P.~Mutzel, and M.~Neumann.
\newblock {TUDataset}: A collection of benchmark datasets for learning with graphs.
\newblock \emph{arXiv preprint}, 2020{\natexlab{a}}.

\bibitem[Morris et~al.(2020{\natexlab{b}})Morris, Rattan, and Mutzel]{Morris2020b}
C.~Morris, G.~Rattan, and P.~Mutzel.
\newblock {Weisfeiler and Leman} go sparse: Towards higher-order graph embeddings.
\newblock In \emph{Advances in Neural Information Processing Systems}, 2020{\natexlab{b}}.

\bibitem[Morris et~al.(2021)Morris, Lipman, Maron, Rieck, Kriege, Grohe, Fey, and Borgwardt]{Mor+2022}
C.~Morris, Y.~Lipman, H.~Maron, B.~Rieck, N.~M. Kriege, M.~Grohe, M.~Fey, and K.~Borgwardt.
\newblock Weisfeiler and {L}eman go machine learning: The story so far.
\newblock \emph{arXiv preprint}, 2021.

\bibitem[Morris et~al.(2023)Morris, Geerts, T{\"{o}}nshoff, and Grohe]{Mor+2023}
C.~Morris, F.~Geerts, J.~T{\"{o}}nshoff, and M.~Grohe.
\newblock {WL} meet {VC}.
\newblock In \emph{ICML}, 2023.

\bibitem[M{\"u}ller et~al.(2023)M{\"u}ller, Galkin, Morris, and Ramp{\'a}{\v{s}}ek]{Mue+2023}
L.~M{\"u}ller, M.~Galkin, C.~Morris, and L.~Ramp{\'a}{\v{s}}ek.
\newblock Attending to graph transformers.
\newblock \emph{arXiv preprint}, 2023.

\bibitem[Murphy et~al.(2019)Murphy, Srinivasan, Rao, and Ribeiro]{Mur+2019b}
R.~Murphy, B.~Srinivasan, V.~Rao, and B.~Ribeiro.
\newblock Relational pooling for graph representations.
\newblock In \emph{International Conference on Machine Learning}, pages 4663--4673, 2019.

\bibitem[Müller and Morris(2024)]{Mue+2024b}
L.~Müller and C.~Morris.
\newblock Towards principled graph transformers.
\newblock In \emph{NeurIPS}, 2024.

\bibitem[Namazi et~al.(2022)Namazi, Ghalebi, Williamson, and Mahyar]{namazi2022smgrl}
R.~Namazi, E.~Ghalebi, S.~Williamson, and H.~Mahyar.
\newblock Smgrl: Scalable multi-resolution graph representation learning.
\newblock \emph{arXiv preprint}, 2022.

\bibitem[Neumann et~al.(2016)Neumann, Garnett, Bauckhage, and Kersting]{Neu+2016}
M.~Neumann, R.~Garnett, C.~Bauckhage, and K.~Kersting.
\newblock Propagation kernels: efficient graph kernels from propagated information.
\newblock \emph{Machine learning}, 102:\penalty0 209--245, 2016.

\bibitem[Niepert et~al.(2021)Niepert, Minervini, and Franceschi]{Niepert2021-pa}
M.~Niepert, P.~Minervini, and L.~Franceschi.
\newblock Implicit {MLE}: Backpropagating through discrete exponential family distributions.
\newblock \emph{Advances in Neural Information Processing Systems}, 2021.

\bibitem[Papp et~al.(2021)Papp, Martinkus, Faber, and Wattenhofer]{Pap+2021}
P.~A. Papp, K.~Martinkus, L.~Faber, and R.~Wattenhofer.
\newblock {DropGNN}: Random dropouts increase the expressiveness of graph neural networks.
\newblock \emph{Advances in Neural Information Processing Systems}, 2021.

\bibitem[Park et~al.(2023)Park, Ryu, Kim, Woo, Yun, and Ahn]{park2023nonbacktracking}
S.~Park, N.~Ryu, G.~Kim, D.~Woo, S.-Y. Yun, and S.~Ahn.
\newblock Non-backtracking graph neural networks.
\newblock \emph{arXiv preprint}, 2023.

\bibitem[Pei et~al.(2020)Pei, Wei, Chang, Lei, and Yang]{webkb}
H.~Pei, B.~Wei, K.~C.-C. Chang, Y.~Lei, and B.~Yang.
\newblock Geom-gcn: Geometric graph convolutional networks.
\newblock In \emph{International Conference on Learning Representations}, 2020.

\bibitem[Pham et~al.(2017)Pham, Tran, Dam, and Venkatesh]{pham2017graph}
T.~Pham, T.~Tran, H.~Dam, and S.~Venkatesh.
\newblock Graph classification via deep learning with virtual nodes.
\newblock \emph{arXiv preprint}, 2017.

\bibitem[Platonov et~al.(2023)Platonov, Kuznedelev, Diskin, Babenko, and Prokhorenkova]{platonov2023critical}
O.~Platonov, D.~Kuznedelev, M.~Diskin, A.~Babenko, and L.~Prokhorenkova.
\newblock A critical look at the evaluation of gnns under heterophily: Are we really making progress?
\newblock \emph{arXiv preprint}, 2023.

\bibitem[Qian et~al.(2023)Qian, Manolache, Ahmed, Zeng, den Broeck, Niepert, and Morris]{qian2023probabilistically}
C.~Qian, A.~Manolache, K.~Ahmed, Z.~Zeng, G.~V. den Broeck, M.~Niepert, and C.~Morris.
\newblock Probabilistically rewired message-passing neural networks, 2023.

\bibitem[Qian et~al.(2024)Qian, Chételat, and Morris]{qian2023exploring}
C.~Qian, D.~Chételat, and C.~Morris.
\newblock Exploring the power of graph neural networks in solving linear optimization problems.
\newblock In \emph{AISTATS}, 2024.

\bibitem[Ramp{\'a}{\v{s}}ek and Wolf(2021)]{rampavsek2021hierarchical}
L.~Ramp{\'a}{\v{s}}ek and G.~Wolf.
\newblock Hierarchical graph neural nets can capture long-range interactions.
\newblock In \emph{IEEE International Workshop on Machine Learning for Signal Processing}, 2021.

\bibitem[Ramp{\'a}{\v{s}}ek et~al.(2022)Ramp{\'a}{\v{s}}ek, Galkin, Dwivedi, Luu, Wolf, and Beaini]{Rampasek2022-uc}
L.~Ramp{\'a}{\v{s}}ek, M.~Galkin, V.~P. Dwivedi, A.~T. Luu, G.~Wolf, and D.~Beaini.
\newblock Recipe for a general, powerful, scalable graph transformer.
\newblock \emph{Advances in Neural Information Processing Systems}, 2022.

\bibitem[Ranjan et~al.(2020)Ranjan, Sanyal, and Talukdar]{ranjan2020asap}
E.~Ranjan, S.~Sanyal, and P.~Talukdar.
\newblock Asap: Adaptive structure aware pooling for learning hierarchical graph representations.
\newblock In \emph{AAAI Conference on Artificial Intelligence}, 2020.

\bibitem[Rosenbluth et~al.(2024)Rosenbluth, T{\"o}nshoff, Ritzert, Kisin, and Grohe]{rosenbluth2024distinguished}
E.~Rosenbluth, J.~T{\"o}nshoff, M.~Ritzert, B.~Kisin, and M.~Grohe.
\newblock Distinguished in uniform: Self-attention vs. virtual nodes.
\newblock In \emph{International Conference on Learning Representations}, 2024.

\bibitem[Saha et~al.(2023)Saha, Mendez, Russell, and Bowden]{saha2023learning}
A.~Saha, O.~Mendez, C.~Russell, and R.~Bowden.
\newblock Learning adaptive neighborhoods for graph neural networks.
\newblock \emph{arXiv preprint}, 2023.

\bibitem[Saunders and Voth(2013)]{saunders2013coarse}
M.~G. Saunders and G.~A. Voth.
\newblock Coarse-graining methods for computational biology.
\newblock \emph{Annual Review of Biophysics}, 42:\penalty0 73--93, 2013.

\bibitem[Scarselli et~al.(2008)Scarselli, Gori, Tsoi, Hagenbuchner, and Monfardini]{Sca+2009}
F.~Scarselli, M.~Gori, A.~C. Tsoi, M.~Hagenbuchner, and G.~Monfardini.
\newblock The graph neural network model.
\newblock \emph{IEEE Transactions on Neural Networks}, 20\penalty0 (1):\penalty0 61--80, 2008.

\bibitem[Schlichtkrull et~al.(2018)Schlichtkrull, Kipf, Bloem, van~den Berg, Titov, and Welling]{Sch+2019}
M.~Schlichtkrull, T.~N. Kipf, P.~Bloem, R.~van~den Berg, I.~Titov, and M.~Welling.
\newblock Modeling relational data with graph convolutional networks.
\newblock In \emph{The Semantic Web}, pages 593--607, 2018.

\bibitem[Shervashidze et~al.(2009)Shervashidze, Vishwanathan, Petri, Mehlhorn, and Borgwardt]{She+2009}
N.~Shervashidze, S.~Vishwanathan, T.~Petri, K.~Mehlhorn, and K.~Borgwardt.
\newblock Efficient graphlet kernels for large graph comparison.
\newblock In \emph{AISTATS}, 2009.

\bibitem[Shervashidze et~al.(2011)Shervashidze, Schweitzer, Van~Leeuwen, Mehlhorn, and Borgwardt]{She+2011}
N.~Shervashidze, P.~Schweitzer, E.~J. Van~Leeuwen, K.~Mehlhorn, and K.~M. Borgwardt.
\newblock Weisfeiler-lehman graph kernels.
\newblock \emph{Journal of Machine Learning Research}, 12\penalty0 (9), 2011.

\bibitem[Shirzad et~al.(2023)Shirzad, Velingker, Venkatachalam, Sutherland, and Sinop]{Shi+2023}
H.~Shirzad, A.~Velingker, B.~Venkatachalam, D.~J. Sutherland, and A.~K. Sinop.
\newblock Exphormer: Sparse transformers for graphs.
\newblock \emph{arXiv preprint}, 2023.

\bibitem[Southern et~al.(2024)Southern, Di~Giovanni, Bronstein, and Lutzeyer]{southern2024understanding}
J.~Southern, F.~Di~Giovanni, M.~Bronstein, and J.~F. Lutzeyer.
\newblock Understanding virtual nodes: Oversmoothing, oversquashing, and node heterogeneity.
\newblock \emph{arXiv preprint}, 2024.

\bibitem[Sperduti and Starita(1997)]{Spe+1997}
A.~Sperduti and A.~Starita.
\newblock Supervised neural networks for the classification of structures.
\newblock \emph{IEEE Transactions on Neural Networks}, 8\penalty0 (3):\penalty0 714--35, 1997.

\bibitem[Topping et~al.(2021)Topping, Di~Giovanni, Chamberlain, Dong, and Bronstein]{Topping2021-iy}
J.~Topping, F.~Di~Giovanni, B.~P. Chamberlain, X.~Dong, and M.~M. Bronstein.
\newblock Understanding over-squashing and bottlenecks on graphs via curvature.
\newblock \emph{arXiv preprint}, 2021.

\bibitem[Tönshoff et~al.(2023)Tönshoff, Ritzert, Rosenbluth, and Grohe]{wdtgg}
J.~Tönshoff, M.~Ritzert, E.~Rosenbluth, and M.~Grohe.
\newblock Where did the gap go? reassessing the long-range graph benchmark.
\newblock \emph{arXiv preprint}, 2023.

\bibitem[Veli\v{c}kovi\'{c} et~al.(2018)Veli\v{c}kovi\'{c}, Cucurull, Casanova, Romero, Li{\`{o}}, and Bengio]{Vel+2018}
P.~Veli\v{c}kovi\'{c}, G.~Cucurull, A.~Casanova, A.~Romero, P.~Li{\`{o}}, and Y.~Bengio.
\newblock Graph attention networks.
\newblock In \emph{International Conference on Learning Representations}, 2018.

\bibitem[Weisfeiler and Leman(1968)]{Wei+1968}
B.~Weisfeiler and A.~Leman.
\newblock The reduction of a graph to canonical form and the algebra which appears therein.
\newblock \emph{nti, Series}, 2\penalty0 (9):\penalty0 12--16, 1968.

\bibitem[Wong et~al.(2023)Wong, Zheng, Valeri, Donghia, Anahtar, Omori, Li, Cubillos-Ruiz, Krishnan, Jin, Manson, Friedrichs, Helbig, Hajian, Fiejtek, Wagner, Soutter, Earl, Stokes, Renner, and Collins]{Won+2023}
F.~Wong, E.~J. Zheng, J.~A. Valeri, N.~M. Donghia, M.~N. Anahtar, S.~Omori, A.~Li, A.~Cubillos-Ruiz, A.~Krishnan, W.~Jin, A.~L. Manson, J.~Friedrichs, R.~Helbig, B.~Hajian, D.~K. Fiejtek, F.~F. Wagner, H.~H. Soutter, A.~M. Earl, J.~M. Stokes, L.~D. Renner, and J.~J. Collins.
\newblock Discovery of a structural class of antibiotics with explainable deep learning.
\newblock \emph{Nature}, 2023.

\bibitem[Wu et~al.(2021)Wu, Jain, Wright, Mirhoseini, Gonzalez, and Stoica]{Wu2022-vr}
Z.~Wu, P.~Jain, M.~Wright, A.~Mirhoseini, J.~E. Gonzalez, and I.~Stoica.
\newblock Representing long-range context for graph neural networks with global attention.
\newblock \emph{Advances in Neural Information Processing Systems}, 2021.

\bibitem[Xie and Ermon(2019)]{xie2019subsets}
S.~M. Xie and S.~Ermon.
\newblock Reparameterizable subset sampling via continuous relaxations.
\newblock \emph{International Joint Conference on Artificial Intelligence}, 2019.

\bibitem[Xu et~al.(2018)Xu, Li, Tian, Sonobe, Kawarabayashi, and Jegelka]{Xu+2018}
K.~Xu, C.~Li, Y.~Tian, T.~Sonobe, K.-i. Kawarabayashi, and S.~Jegelka.
\newblock Representation learning on graphs with jumping knowledge networks.
\newblock In \emph{International Conference on Machine Learning}, 2018.

\bibitem[Xu et~al.(2019)Xu, Hu, Leskovec, and Jegelka]{xu2018how}
K.~Xu, W.~Hu, J.~Leskovec, and S.~Jegelka.
\newblock How powerful are graph neural networks?
\newblock In \emph{International Conference on Learning Representations}, 2019.

\bibitem[Ying et~al.(2021)Ying, Cai, Luo, Zheng, Ke, He, Shen, and Liu]{graphormer}
C.~Ying, T.~Cai, S.~Luo, S.~Zheng, G.~Ke, D.~He, Y.~Shen, and T.-Y. Liu.
\newblock Do transformers really perform badly for graph representation?
\newblock \emph{Advances in neural information processing systems}, 34:\penalty0 28877--28888, 2021.

\bibitem[Ying et~al.(2018)Ying, You, Morris, Ren, Hamilton, and Leskovec]{ying2018diffpool}
Z.~Ying, J.~You, C.~Morris, X.~Ren, W.~Hamilton, and J.~Leskovec.
\newblock Hierarchical graph representation learning with differentiable pooling.
\newblock \emph{Advances in Neural Information Processing Systems}, 2018.

\bibitem[Younesian et~al.(2023)Younesian, Thanapalasingam, van Krieken, Daza, and Bloem]{younesian2023grapes}
T.~Younesian, T.~Thanapalasingam, E.~van Krieken, D.~Daza, and P.~Bloem.
\newblock {GRAPES}: Learning to sample graphs for scalable graph neural networks.
\newblock \emph{arXiv preprint}, 2023.

\bibitem[Yu et~al.(2021)Yu, Zhang, Jiang, Wu, and Yang]{yu2021graph}
D.~Yu, R.~Zhang, Z.~Jiang, Y.~Wu, and Y.~Yang.
\newblock Graph-revised convolutional network.
\newblock In \emph{European Conference on Machine Learning and Knowledge Discovery in Databases}, 2021.

\bibitem[Zhang et~al.(2018)Zhang, Cui, Neumann, and Chen]{Zha+2018}
M.~Zhang, Z.~Cui, M.~Neumann, and Y.~Chen.
\newblock An end-to-end deep learning architecture for graph classification.
\newblock In \emph{AAAI Conference on Artificial Intelligence}, 2018.

\bibitem[Zhao et~al.(2021)Zhao, Liu, Neves, Woodford, Jiang, and Shah]{zhao2021data}
T.~Zhao, Y.~Liu, L.~Neves, O.~Woodford, M.~Jiang, and N.~Shah.
\newblock Data augmentation for graph neural networks.
\newblock In \emph{AAAI Conference on Artificial Intelligence}, 2021.

\bibitem[Zhong et~al.(2023)Zhong, Li, and Pang]{zhong2023hierarchical}
Z.~Zhong, C.-T. Li, and J.~Pang.
\newblock Hierarchical message-passing graph neural networks.
\newblock \emph{Data Mining and Knowledge Discovery}, 37\penalty0 (1):\penalty0 381--408, 2023.

\bibitem[Zhou et~al.(2023{\natexlab{a}})Zhou, Zhou, Mao, Zhou, Chen, Tan, Zha, Feng, Chen, and Wang]{OpenGSL}
Z.~Zhou, S.~Zhou, B.~Mao, X.~Zhou, J.~Chen, Q.~Tan, D.~Zha, Y.~Feng, C.~Chen, and C.~Wang.
\newblock {OpenGSL}: A comprehensive benchmark for graph structure learning, 2023{\natexlab{a}}.

\bibitem[Zhou et~al.(2023{\natexlab{b}})Zhou, Zhou, Mao, Zhou, Chen, Tan, Zha, Wang, Feng, and Chen]{zhou2023opengsl}
Z.~Zhou, S.~Zhou, B.~Mao, X.~Zhou, J.~Chen, Q.~Tan, D.~Zha, C.~Wang, Y.~Feng, and C.~Chen.
\newblock {OpenGSL}: A comprehensive benchmark for graph structure learning.
\newblock \emph{arXiv preprint}, 2023{\natexlab{b}}.

\bibitem[Zou et~al.(2023)Zou, Peng, Huang, Yang, Li, Wu, Liu, and Yu]{zou2023se}
D.~Zou, H.~Peng, X.~Huang, R.~Yang, J.~Li, J.~Wu, C.~Liu, and P.~S. Yu.
\newblock {SE-GSL}: A general and effective graph structure learning framework through structural entropy optimization.
\newblock \emph{arXiv preprint}, 2023.

\end{thebibliography}

\clearpage
\newpage
\begin{appendices}

\renewcommand{\thetable}{A\arabic{table}}
\renewcommand{\thefigure}{A\arabic{figure}}

\section{Additional Related Work}

In the following, we discuss related work.

\paragraph{Graph Structure Learning}\label{gsl} Graph structure learning (GSL) is closely related to graph rewiring, where the primary motivation is refining and optimizing the graph structure while jointly learning graph representations~\citep{OpenGSL}. Several methods have been proposed in this field. \citet{jin2020graph} develop a technique to optimize graph structures from scratch using a specific loss function as a bias, while the general approach is using edge scoring functions for refinment ~\citep{chen2020iterative,yu2021graph,zhao2021data}, but discrete sampling methods have also been applied. More specifically, DGM \citep{kazi2022differentiable} is predicting the latent graph structure by leveraging Gumbel discrete sampling, while \citet{franceschi2019learning} is learning a Bernoulli distribution via Hypergradient Descent. \citet{saha2023learning} learns adaptive neighborhoods for trajectory prediction and point cloud classification by sampling through the smoothed-Heaviside function, while \citet{younesian2023grapes} samples nodes that are used for downstream tasks using GFlowNets. Some GSL techniques also employ unsupervised learning  \citep{zou2023se,fatemi2021slaps, liu2022compact,liu2022towards}. We encourage the reader to refer to~\cite{fatemi2023ugsl,zhou2023opengsl} for detailed surveys regarding GSL.

Our proposed IPR-MPNN is different from other GSL framework in two main ways: (1) instead of sparsifying the graph using randomized $k$-NN approaches or independent Bernoulli random variables, we learn a probability mass function with exactly-$k$ constraints \citep{ahmed2022simple}. Moreover, we don't aim to discover the graph structure by considering a fully-connected latent graph from which we sample new edges \citep{qian2023probabilistically}, instead we introduce sparse connections from base nodes to virtual nodes, with complexity $N \cdot k$; (2) GSL methods do not investigate exact sampling of the exactly-$k$ distribution; however, one of our aims is to demonstrate that these techniques can significantly alleviate information propagation issues caused by inadequate graph connectivity, such as over-squashing and under-reaching.

Moreover, our IPR-MPNN also differs from previous graph rewiring method, specifically PR-MPNN \citep{qian2023probabilistically}. First, we rewire the graph implicitly by connecting nodes to virtual nodes, respecting the original graph structure. Besides, we show a significant run-time advantage in that our worst-case complexity is sub-quadratic, while PR-MPNN optimally needs to consider $n^2$ node pairs for an $n$-order graph. 

\paragraph{Hierarchical MPNNs}\label{hier} Our method draws connections to hierarchical MPNNs. The hierarchical model initially emerged in graph-level representation learning, as seen in approaches like AttPool~\citep{attpool}, DiffPool ~\citep{ying2018diffpool}, and ASAP \citep{ranjan2020asap}. Further developments, such as Graph U-Net \citep{gao2019unets}, H-GCN \citep{hu2019hierarchical}, and GXN \citep{li2020gxn}, introduced top-down and bottom-up methods within their architectures. However, they did not incorporate virtual node message passing. Other works create hierarchical MPNNs while incorporating inter-hierarchical message passing. For example,~\citet{fey2020hierarchical} introduced HIMP-GNN on molecular learning, using a junction tree to create a higher hierarchy of the original graph and do inter-message passing between the hierarchies. \citet{rampavsek2021hierarchical} proposed HGNet for long-range dependencies, generating hierarchies with edge pooling and training with relational GCN. \citet{zhong2023hierarchical} designed HC-GNN, more efficient than HGNet, for better node and higher level resolution community representations. These hierarchical MPNNs require well-chosen heuristics for hierarchy generation. Using an auxiliary supernode is particularly prominent in molecular tasks \citep{Gil+2017, pham2017graph, li2017learning}, which involves adding a global node to encapsulate graph-level representations. Further advancements in this area, as suggested in \citep{battaglia2018relational}, and developments like GWM \citet{ishiguro2019graph}, have enhanced supernode MPNNs with a gating mechanism. Theoretically, \citet{cai2023connection} proves MPNN with a virtual node can simulate self-attention, which is further investigated in \citet{rosenbluth2024distinguished}. In addition, \citet{southern2024understanding} studied the effect of MPNNs using a virtual node on over smoothing and oversquashing. Simultaneously, the recent study by \citet{li2023long} has introduced the concept of a collection of supernodes, termed ``neural atoms,'' which incorporate supernode message passing with an attention mechanism. Moreover, the idea of a coarsened hierarchical graph has become widely employed in scalable MPNN training and graph representation learning, as evidenced by works like \citet{huang2021scaling,liang2021mile,namazi2022smgrl}. \emph{Unlike existing hierarchical MPNNs, our IPR-MPNN uniquely leverages differential $k$-subset sampling techniques for dynamic, probabilistic graph rewiring and incorporates hierarchical message passing in an end-to-end trainable framework. This approach enhances graph connectivity and expressiveness without relying on predefined heuristics or fixed structures.}

\section{Extended Notation}\label{notation_app}

A \new{graph} $G$ is a pair $(V(G),E(G))$ with \emph{finite} sets of
\new{vertices} or \new{nodes} $V(G)$ and \new{edges} $E(G) \subseteq \{ \{u,v\} \subseteq V(G) \mid u \neq v \}$. If not otherwise stated, we set $n \coloneqq |V(G)|$, and the graph is of \new{order} $n$. We also call the graph $G$ an $n$-order graph. For ease of notation, we denote the edge $\{u,v\}$ in $E(G)$ by $(u,v)$ or $(v,u)$. A \new{(vertex-)labeled graph} $G$ is a triple $(V(G),E(G),\ell)$ with a (vertex-)label function $\ell \colon V(G) \to \Nb$. Then $\ell(v)$ is a \new{label} of $v$, for $v$ in $V(G)$. An \new{attributed graph} $G$  is a triple $(V(G),E(G),\sigma)$ with a graph $(V(G),E(G))$ and (vertex-)attribute function $\sigma \colon V(G) \to \Rb^{1 \times d}$, for some $d > 0$. That is, contrary to labeled graphs, we allow for vertex annotations from an uncountable set. Then $\sigma(v)$ is an \new{attribute} or \new{feature} of $v$, for $v$ in $V(G)$. Equivalently, we define an $n$-order attributed graph $G \coloneqq (V(G),E(G),\sigma)$ as a pair $\mG=(G,\mLG)$, where $G = (V(G),E(G))$ and $\mLG$ in $\Rb^{n\times d}$ is a \new{node feature matrix}. Here, we identify $V(G)$ with $[n]$. For a matrix $\mLG$ in $\Rb^{n\times d}$ and $v$ in $[n]$, we denote by $\mLG_{v \cdot}$ in $\Rb^{1\times d}$ the $v$th row of $\mLG$ such that $\mL_{v \cdot} \coloneqq \sigma(v)$. Furthermore, we can encode an $n$-order graph $G$ via an \new{adjacency matrix} $\vec{A}(G) \in \{ 0,1 \}^{n \times n}$, where $A_{ij} = 1$ if, and only, if $(i,j) \in E(G)$. We also write $\Rb^d$ for $\Rb^{1\times d}$. 

The \new{neighborhood} of $v$ in $V(G)$ is denoted by $N(v) \coloneqq  \{ u \in V(G) \mid (v, u) \in E(G) \}$ and the \new{degree} of a vertex $v$ is  $|N(v)|$. Two graphs $G$ and $H$ are \new{isomorphic} and we write $G \simeq H$ if there exists a bijection $\varphi \colon V(G) \to V(H)$ preserving the adjacency relation, i.e., $(u,v)$ is in $E(G)$ if and only if $(\varphi(u),\varphi(v))$ is in $E(H)$. Then $\varphi$ is an \new{isomorphism} between
$G$ and $H$. In the case of labeled graphs, we additionally require that
$l(v) = l(\varphi(v))$ for $v$ in $V(G)$, and similarly for attributed graphs. %

A \new{node coloring} is a function $c \colon V(G) \to \Rb^d$, $d > 0$,  and we say that  $c(v)$ is the \new{color} of $v \in V(G)$. A node coloring induces an \new{edge coloring} $e_c \colon E(G) \to \Nb$, where $(u,v) \mapsto \{ c(u),c(v) \}$ for $(u,v) \in E(G)$. A node coloring (edge coloring) $c$ \new{refines} a node coloring (edge coloring) $d$, written $c \sqsubseteq d$ if $c(v) = c(w)$ implies $d(v) = d(w)$ for every $v, w \in V(G)$ ($v, w \in E(G)$). Two colorings are equivalent if $c \sqsubseteq d$ and $d \sqsubseteq c$, in which case we write $c \equiv d$. A \new{color class} $Q \subseteq V(G)$ of a node coloring $c$ is a maximal set of nodes with $c(v) = c(w)$ for every $v, w \in Q$. A node coloring is called \new{discrete} if all color classes have cardinality $1$.

\section{The \texorpdfstring{$1$}{1}-dimensional Weisfeiler--Leman algorithm}\label{sec:wl}

The \wlone{} or \new{color refinement} is a fundamental, well-studied heuristic for the graph isomorphism problem, originally proposed by~\citet{Wei+1968}.\footnote{Strictly speaking, the \wlone{} and color refinement are two different algorithms. That is, the \wlone{} considers neighbors and non-neighbors to update the coloring, resulting in a slightly higher expressive power when distinguishing vertices in a given graph; see~\cite {Gro+2021} for details. Following customs in the machine learning literature, we consider both algorithms to be equivalent.} The algorithm is an iterative method starting from labeling or coloring vertices in both graphs with degrees or other information, and updating the color of a node with its color as well as its neighbors' colors. During the iterations, two vertices with the same label get different labels if the number of identically labeled neighbors is unequal. Each iteration ends up with a vertex color partition, and the algorithm terminates when the partition is not refined by the algorithm, i.e., when a \new{stable coloring} or \new{stable partition} is obtained. We can finally conclude that the two graphs are not isomorphic if the color partitions are different, or the number of nodes of a specific color is different. Although in ~\citet{Cai+1992} the limitation is shown that \wlone{} algorithm cannot distinguish all non-isomorphic graphs, it is a powerful heuristic that can succeed on a broad class of graphs~\citep{Arv+2015,Bab+1979,Bab+1980}.

Formally, let $G = (V(G),E(G),\ell)$ be a labeled graph. In each iteration, $t > 0$, the \wlone{} computes a vertex coloring $C^1_t \colon V(G) \to \Nb$, depending on the coloring of the ego node and of the neighbors. That is, in iteration $t>0$, we set
\begin{equation*}
	C^1_t(v) \coloneqq \REL\Big(\!\big(C^1_{t-1}(v),\oms C^1_{t-1}(u) \mid u \in N(v)  \cms \big)\! \Big),
\end{equation*}
for all vertices $v \in V(G)$, where $\REL$ injectively maps the above pair to a unique natural number, which has not been used in previous iterations. In iteration $0$, the coloring $C^1_{0}\coloneqq \ell$ is used. To test whether two graphs $G$ and $H$ are non-isomorphic, we run the above algorithm in ``parallel'' on both graphs. If the two graphs have a different number of vertices colored $c \in \Nb$ at some iteration, the \wlone{} \new{distinguishes} the graphs as non-isomorphic. Moreover, if the number of colors between two iterations, $t$ and $(t+1)$, does not change, i.e., the cardinalities of the images of $C^1_{t}$ and $C^1_{i+t}$ are equal, or, equivalently,
\begin{equation*}
	C^1_{t}(v) = C^1_{t}(w) \iff C^1_{t+1}(v) = C^1_{t+1}(w),
\end{equation*}
for all vertices $v$ and $w$ in $V(G\,\dot\cup H)$, then the algorithm terminates. For such $t$, we define the \new{stable coloring} $C^1_{\infty}(v) = C^1_t(v)$, for $v \in V(G\,\dot\cup H)$. The stable coloring is reached after at most $\max \{ |V(G)|,|V(H)| \}$ iterations ~\citep{Gro2017,Kie+2020}. A function $f \colon V(G) \to \Rb^d$, for $d >0$, is \new{\wlone-equivalent} if $f \equiv C^1_{\infty}$.

\label{sec:gradient}

\section{Additional Empirical Results and Experimental Details}
\label{sec:add_exp}
Here, we provide additional empirical results and experimental details.

\begin{sidewaystable}[ht]
\caption{Overview of used hyperparameters.}
\label{tab:hyperparams}
\centering
\resizebox{\linewidth}{!}{%
\begin{sc}
    \begin{tabular}{lcccccccc}
        \toprule
         Dataset & Hidden\textsubscript{upstream} & Layers\textsubscript{upstream} & Hidden\textsubscript{downstream} & 
         Hidden\textsubscript{virtual} & Layers\textsubscript{downstream} & K & Virtual nodes & samples\textsubscript{train/test} \\
        \midrule
        \textsc{Zinc}        & 128        & 2         & 128             & 256            & 10        & 1 & 2 & 2  \\
        \textsc{OGBG-Molhiv} & \{64,128\} & \{0,2,5\} & \{64,128,256\}  & \{64,128,256\} & \{3,5,8\} & 1 & 2 & 2  \\
        \textsc{QM9} & 128 & 2 & 128 & 256 & 10 & 1 & 2 & 2  \\
        \textsc{Peptides-func} & 128 & 2 & 128 & 256 & 10 & 1 & 2 & 2  \\
        \textsc{Peptides-struct} & 128 & 2 & 128 & 256 & 10 & 1 & 2 & 2  \\
        \textsc{Mutag} & \{64,128\} & \{0,2,5\} & \{64,128\} & \{64,128\} & \{3,5,8\} & 1 & 2 & 2  \\
        \textsc{PTC\_MR} & \{64,128\} & \{0,2,5\} & \{64,128\} & \{64,128\} & \{3,5,8\} & 1 & 2 & 2  \\
        \textsc{NCI1} & \{64,128\} & \{0,2,5\} & \{64,128\} & \{64,128\} & \{3,5,8\} & 1 & 2 & 2  \\
        \textsc{NCI109} & \{64,128\} & \{0,2,5\} & \{64,128\} & \{64,128\} & \{3,5,8\} & 1 & 2 & 2  \\
        \textsc{Proteins} & \{64,128\} & \{0,2,5\} & \{64,128\} & \{64,128\} & \{3,5,8\} & 1 & 2 & 2  \\
        \textsc{Trees-Leafcount} & 32 & 2 & 32 & 64 & 1 & 1 & 2 & 2  \\
        \textsc{Trees-Leafmatch-n} & 32 & 2 & 32 & 64 & \{n+1\} & 1 & 2 & 2  \\
        \textsc{Csl}  & 64 & 1 & 64 & 64 & 6 & 7 & 8 & 15  \\
        \textsc{Exp} & 64 & 1 & 64 & 128 & 6 & 3 & 4 & 2  \\
        \textsc{Cornell} & 128 & 2 & 256 & 284 & 1 & 1 & 2 & 2  \\
        \textsc{Texas} & 128 & 3 & 256 & 284 & 3 & 1 & 2 & 2  \\
        \textsc{Wisconsin} & 64 & 3 & 128 & 284 & 1 & 1 & 2 & 3  \\
        \textsc{Roman-Empire} & 64 & 2 & 256 & 256 & 3 & 1 & 3 & 1  \\
        \textsc{Tolokers} & 128 & 3 & 256 & 384 & 3 & 1 & 1 & 1  \\
        \textsc{Minesweeper} & 64 & 2 & 256 & 256 & 3 & 1 & 3 & 1  \\
        \textsc{Amazon-Ratings} & 128 & 2 & 256 & 128 & 4 & 1 & 3 & 3  \\
        \bottomrule
    \end{tabular}
\end{sc}
}
\end{sidewaystable}

\paragraph{Details} In all of our real-world experiments, we use two virtual nodes with a hidden dimension twice as large as the base nodes. We randomly initialize the features of the virtual nodes. For the upstream and downstream models, we do a hyperparameter search; see~\Cref{tab:hyperparams}. We use RWSE and LapPE positional encodings \citep{dwivedi2022graph} for all of our experiments as additional node features, except for the synthetic \textsc{Trees} datasets \citep{Alon2020,qian2023probabilistically}, \textsc{Exp} \citep{Abb+2020} and \textsc{Csl} \citep{Mur+2019b}, as well as node classification tasks on heterophilic datasets.
For \textsc{Trees-LeafCount}, we use a single one-layer downstream GINE network, and for \textsc{Trees-NeighboursMatch}, we use $n+1$ layers, where $n$ is the depth of the tree. For both \textsc{Trees} datasets, we have a hidden dimension of 32 for the original nodes. For most graph-level tasks, we apply a read-out function over the final pooled node embeddings, virtual node embeddings, or a combination of both. Distinctly, for \textsc{Peptides-func}, we apply read-out functions and use a supervised loss on all the intermediate embeddings, similarly to \citet{errica2023adaptive}. We compute the logarithm of the symmetric sensitivity between the most distant two nodes $u,v$ in \cref{fig:model_sensitivity} similar to \citet{errica2023adaptive}, i.e. 1, and the total Effective Resistance of the datasets in \cref{fig:effective_resistance} as in \citet{black2023understanding}. We optimize the network using Adam \cite{Kin+2015} with a cosine annealing learning rate scheduler. We use the official dataset splits when available. Notably, for the \textsc{TUDataset}, \textsc{WebKB} datasets \citep{webkb1998} and heterophilic datasets proposed in \citet{platonov2023critical}, we perform a 10-Fold Cross-Validation and report the average validation performance, similarly to the other methods that we compare with. 
For some experiments, we search for hyperparameters using grid-search, for more details please see \cref{tab:hyperparams}. All experiments were performed on a mixture of A10, A100, A5000, and RTX 4090 NVIDIA GPUs. For each run, we used at most eight CPUs and 64\,GB of RAM.

\paragraph{Heterophilic datasets} We exhibit experimental results on the \textsc{WebKB} datasets in \cref{tab:osm}, and on the heterophilic datasets in \cref{tab:hetero}. Our method exhibits significant improvement on \textsc{WebKB} datasets as well as \textsc{Roman-Empire} datasets compared with other MPNN baselines. 

\paragraph{Ablations} We conduct ablation experiments on the number of virtual nodes and repetition of samples, see \cref{tab:ablation1}. 

\begin{table}
    \caption{Performance comparison of different models on the Cornell, Texas, and Wisconsin heterophilic datasets.}
    \label{tab:osm}
    \centering
    \resizebox{0.7\linewidth}{!}{%
    \begin{sc}
    \begin{tabular}{lccc}
        \toprule
        \textbf{Model} & Cornell $\uparrow$ & Texas $\uparrow$ & Wisconsin $\uparrow$ \\
        \midrule
        GINE \citeyear{xu2018how} & 0.448 \scriptsize$\pm$0.073 & 0.650\scriptsize$\pm$0.068 & 0.517\scriptsize$\pm$0.054  \\
        SDRF \citeyear{Topping2021-iy} & 0.546 \scriptsize$\pm$0.004 & 0.644 \scriptsize$\pm$0.004 & 0.555 \scriptsize$\pm$0.003 \\
        DIGL \citeyear{Kli+2019} & 0.582 \scriptsize$\pm$0.005 & 0.620 \scriptsize$\pm$0.003 & 0.495 \scriptsize$\pm$0.003 \\
        Geom-GCN \citeyear{webkb} & 0.608 \scriptsize$\pm$N/A & 0.676 \scriptsize$\pm$N/A & 0.641 \scriptsize$\pm$N/A \\
        DiffWire \citeyear{arnaiz2022diffwire} & 0.690 \scriptsize$\pm$0.044 & N/A & 0.791 \scriptsize$\pm$0.021 \\
        Graphormer \citeyear{graphormer} & 0.683 \scriptsize$\pm$0.017 & 0.767 \scriptsize$\pm$0.017 & 0.770 \scriptsize$\pm$0.019 \\
        GPS \citeyear{Rampasek2022-uc} & 0.718 \scriptsize$\pm$0.024 & 0.773 \scriptsize$\pm$0.013 & 0.798 \scriptsize$\pm$0.090  \\
        \midrule
        IPR-MPNN (ours) & \textbf{0.764} \scriptsize$\pm$0.056 & \textbf{0.808} \scriptsize$\pm$0.052 & \textbf{0.804} \scriptsize$\pm$0.052 \\
        \bottomrule
    \end{tabular}
    \end{sc}
    }
\end{table}

\begin{table}
    \caption{Performance comparison between the base GINE and IPR-MPNN on recently-proposed heterophilic datasets.}
    \label{tab:hetero}
    \centering
    \resizebox{0.9\linewidth}{!}{%
    \begin{sc}
    \begin{tabular}{lcccc}
        \toprule
        \textbf{Model} & Roman-empire & Tolokers & Minesweeper & Amazon-ratings \\
        \midrule
        GINE (base) & 0.476\scriptsize$\pm$0.006 & 0.807\scriptsize$\pm$0.006 & 0.799\scriptsize$\pm$0.002 &\textbf{0.488}\scriptsize$\pm$0.006\\
        IPR-MPNN (ours) & \textbf{0.839}\scriptsize$\pm$0.006 & \textbf{0.820}\scriptsize$\pm$0.008 & \textbf{0.887}\scriptsize$\pm$0.006 & 0.480\scriptsize$\pm$0.007 \\
        \bottomrule
    \end{tabular}
    \end{sc}
    }
\end{table}

\begin{table}
    \caption{Performance between a virtual node connected to the entire original graph (1VN-FC) and IPR-MPNNs with two virtual nodes with one sample (2VN1S) and two samples, respectively (2VN2S).}
    \label{tab:ablation1}
    \centering
    \resizebox{0.88\linewidth}{!}{%
    \begin{sc}
    \begin{tabular}{lcccc}
        \toprule
        \textbf{Model} & \textbf{ZINC} & \textbf{ogb-molhiv} & \textbf{peptides-func} & \textbf{peptides-struct} \\
        \midrule
        1VN - FC & 0.074 \scriptsize$\pm$0.002 & 0.753 \scriptsize$\pm$0.011 & 0.7039 \scriptsize$\pm$0.0046 & 0.2435 \scriptsize$\pm$0.0007 \\
        2VN1S & 0.072 \scriptsize$\pm$0.004 & 0.762 \scriptsize$\pm$0.014 & 0.7146 \scriptsize$\pm$0.0055 & 0.2472 \scriptsize$\pm$0.0014 \\
        2VN2S & \textbf{0.067} \scriptsize$\pm$0.001 & \textbf{0.788} \scriptsize$\pm$0.006 & \textbf{0.7210} \scriptsize$\pm$0.0039 & \textbf{0.2422} \scriptsize$\pm$0.0007 \\
        2VN4S & -- & -- & 0.7145 \scriptsize$\pm$0.0020 & -- \\
        \bottomrule
    \end{tabular}
    \end{sc}
    }
    \vskip -0.2in
\end{table}

\begin{figure}
   \centering
    \includegraphics[width=0.5\linewidth]{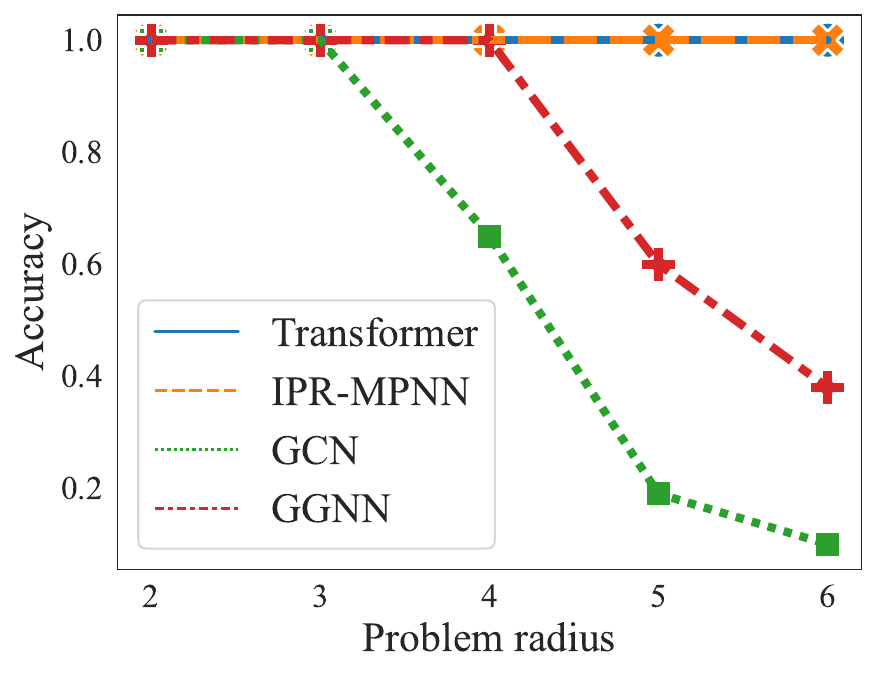}
    \caption{IPR-MPNN obtains perfect accuracy on {\sc Trees-NeighborsMatch}~\citep{Alon2020} for a depth up to 6.}
    \label{fig:match}

\end{figure}

\begin{table}[!t]
    \centering
    \caption{IPR-MPNN compared to other approaches as reported in \citet{giusti2023cin, Karhadkar2022-uz, Pap+2021, arnaiz2022diffwire, qian2023probabilistically}. \textbf{\textcolor{first}{Green}} indicates the best model, \textbf{\textcolor{second}{blue}} the second-best, and \textbf{\textcolor{third}{red}} the third. Our rewiring technique obtains the best performance on every dataset, except for \textsc{Mutag}, where PR-MPNN obtains a slightly better average mean score and standard deviation.}
    \label{tab:tud}
    \resizebox{0.7\linewidth}{!}{%
    \begin{sc}
    \begin{tabular}{l  ccccc}
        \toprule
        \textbf{Model} & 
        \textsc{Mutag} &
        \textsc{PTC\_MR} &
        \textsc{Proteins} &
        \textsc{NCI1} &
        \textsc{NCI109} \\
        \midrule       
        
        GK ($k=3$) [\citeyear{She+2009}] &
        81.4\scriptsize$\pm1.7$ & %
        55.7\scriptsize$\pm0.5$ & %
        71.4\scriptsize$\pm0.3$ &  %
        62.5\scriptsize$\pm0.3$ & %
        62.4\scriptsize$\pm0.3$ \\

        PK [\citeyear{Neu+2016}] & 
         76.0\scriptsize$\pm2.7$& %
         59.5\scriptsize$\pm2.4$ & %
         73.7\scriptsize$\pm0.7$ & %
         82.5\scriptsize$\pm0.5$ & %
         N/A  \\

        WL kernel [\citeyear{She+2011}] &
          90.4\scriptsize$\pm5.7$ & %
          59.9\scriptsize$\pm4.3$ & %
          75.0\scriptsize$\pm3.1$ & %
          \textcolor{second}{\textbf{86.0\scriptsize$\pm1.8$}} & %
          N/A \\

        \midrule

        DGCNN [\citeyear{Zha+2018}] & 
        85.8\scriptsize$\pm1.8$ & %
        58.6\scriptsize$\pm2.5$ & %
        75.5\scriptsize$\pm0.9$ & %
        74.4\scriptsize$\pm0.5$ & %
        N/A \\ %

        IGN [\citeyear{Mar+2019c}] &
        83.9\scriptsize$\pm13.0$ & %
        58.5\scriptsize$\pm6.9$ & %
        76.6\scriptsize$\pm5.5$ & %
        74.3\scriptsize$\pm2.7$ & %
        72.8\scriptsize$\pm1.5$ \\

        GIN [\citeyear{xu2018how}] & 
        89.4\scriptsize$\pm5.6$ & %
        64.6\scriptsize$\pm7.0$ &  %
        76.2\scriptsize$\pm2.8$ & %
        82.7\scriptsize$\pm1.7$ &  %
        N/A \\

        PPGNs [\citeyear{Mar+2019}] &
        90.6\scriptsize$\pm8.7$ & %
        66.2\scriptsize$\pm6.6$ & %
        77.2\scriptsize$\pm4.7$ &  %
        83.2\scriptsize$\pm1.1$ & %
        82.2\scriptsize$\pm1.4$ \\

        Natural GN [\citeyear{de2020natural}] &
        89.4\scriptsize$\pm1.6$ & %
        66.8\scriptsize$\pm1.7$ & %
        71.7\scriptsize$\pm1.0$ & %
        82.4\scriptsize$\pm1.3$ &  %
        83.0\scriptsize$\pm1.9$ \\

        GSN [\citeyear{botsas2020improving}] &
        92.2\scriptsize$\pm7.5$ & %
        68.2\scriptsize$\pm7.2$ & %
        76.6\scriptsize$\pm5.0$ & %
        83.5\scriptsize$\pm2.0$ &  %
        83.5\scriptsize$\pm2.3$ \\

        \midrule
        
        CIN [\citeyear{Bod+2021}] & 
        92.7\scriptsize$\pm6.1$ & %
        68.2\scriptsize$\pm5.6$ & %
        77.0\scriptsize$\pm4.3$ & %
        83.6\scriptsize$\pm1.4$ & %
        \textcolor{third}{\textbf{84.0\scriptsize$\pm1.6$}}  \\

        CAN [\citeyear{giusti2022cell}] & 
        94.1\scriptsize$\pm4.8$ & %
        72.8\scriptsize$\pm8.3$ & %
        78.2\scriptsize$\pm2.0$ &  %
        84.5\scriptsize$\pm1.6$  & %
        83.6\scriptsize$\pm1.2$  \\

        CIN++ [\citeyear{giusti2023cin}] & 
        \textcolor{third}{\textbf{94.4\scriptsize$\pm3.7$}} & %
        \textcolor{third}{\textbf{73.2\scriptsize$\pm6.4$}} & %
        \textcolor{third}{\textbf{80.5\scriptsize$\pm3.9$}} & %
        85.3\scriptsize$\pm1.2$ & %
        \textcolor{third}{\textbf{84.5\scriptsize$\pm2.4$}}  \\

        \midrule

        GT [\citeyear{Dwivedi2020-cd}] & 
        83.9\scriptsize$\pm6.5$ & %
        58.4\scriptsize$\pm8.2$ & %
        70.1\scriptsize$\pm3.2$ & %
        80.0\scriptsize$\pm1.9$ & %
        N/A  \\

        GraphiT [\citeyear{mialon2021graphit}] & 
        90.5\scriptsize$\pm7.0$ & %
        62.0\scriptsize$\pm9.4$ & %
        76.2\scriptsize$\pm4.4$ & %
        81.4\scriptsize$\pm2.2$ & %
        N/A  \\

        \midrule

        FoSR [\citeyear{Karhadkar2022-uz}] & 
        86.2\scriptsize$\pm1.5$ & %
        58.5\scriptsize$\pm1.7$ & %
        75.1\scriptsize$\pm0.8$ & %
        72.9\scriptsize$\pm0.6$ & %
        71.1\scriptsize$\pm0.6$  \\

        DropGNN [\citeyear{Pap+2021}] & 
        90.4\scriptsize$\pm7.0$ & %
        66.3\scriptsize$\pm8.6$ & %
        76.3\scriptsize$\pm6.1$ & %
        81.6\scriptsize$\pm1.8$ & %
        80.8\scriptsize$\pm2.6$  \\

        GAP(R) [\citeyear{arnaiz2022diffwire}] & 
        86.9\scriptsize$\pm4.0$ & %
        N/A & %
        75.0\scriptsize$\pm3.0$ & %
        N/A & %
        N/A  \\

        GAP(N) [\citeyear{arnaiz2022diffwire}] & 
        86.9\scriptsize$\pm4.0$ & %
        N/A & %
        75.3\scriptsize$\pm2.1$ & %
        N/A & %
        N/A  \\

        PR-MPNN [\citeyear{qian2023probabilistically}] & 
        \textcolor{first}{\textbf{98.4\scriptsize$\pm2.4$}} & %
        \textcolor{second}{\textbf{74.3\scriptsize$\pm3.9$}} & %
        \textcolor{second}{\textbf{80.7\scriptsize$\pm3.9$}} & %
        \textcolor{third}{\textbf{85.6\scriptsize$\pm0.8$}} & %
        \textcolor{second}{\textbf{84.6\scriptsize$\pm1.2$}} \\ 

        \midrule

        IPR-MPNN (ours) & 
        \textcolor{second}{\textbf{98.0\scriptsize$\pm 3.4$}} & %
        \textcolor{first}{\textbf{75.8\scriptsize$\pm 5.3$}} & %
        \textcolor{first}{\textbf{85.4\scriptsize$\pm 4.4$}} & %
        \textcolor{first}{\textbf{86.2\scriptsize$\pm 1.2$}} & %
        \textcolor{first}{\textbf{86.5\scriptsize$\pm 1.4$}} \\ %
        
        \bottomrule

    \end{tabular}
    \end{sc}
    }
\end{table}

\begin{table}[t]
  \centering
  \begin{minipage}{0.48\textwidth}
    \centering
    \caption{Comparison between the base GIN model, its variants, and IPR-MPNN on the \textsc{Exp} dataset.}
    \begin{small}
    \begin{sc}
    \resizebox{.8\columnwidth}{!}{
    \begin{tabular}{l r}
      \toprule
      \textbf{Model} & Accuracy $\uparrow$ \\
      \midrule
      GIN  & $0.511$\scriptsize $\pm 0.021$ \\
      GIN + ID-GNN  & $1.000$\scriptsize $\pm 0.000$ \\
      PR-MPNN & $1.000$\scriptsize $\pm 0.000$ \\
      IPR-MPNN (ours) & $1.000$\scriptsize $\pm 0.000$ \\
      \bottomrule
    \end{tabular}}
    \end{sc}
    \end{small}
    \label{tab:exp}
  \end{minipage}
  \hfill
  \begin{minipage}{0.48\textwidth}
    \centering
    \caption{Comparison between the base GIN model w/wo positional encoding and IPR-MPNN on \textsc{Csl} dataset. For IPR-MPNN*, we pre-calculate the graph partitioning for each data instance, and label each node with its partition ID. }
    \begin{small}
    \begin{sc}
    \resizebox{.8\columnwidth}{!}{
    \begin{tabular}{l r}
      \toprule
      \textbf{Model} & Accuracy $\uparrow$ \\
      \midrule
      GIN  & $0.100$\scriptsize $\pm 0.000$ \\
      GIN + PosEnc & $1.000$\scriptsize $\pm 0.000$ \\
      PR-MPNN & $0.998$\scriptsize $\pm 0.008$ \\
      IPR-MPNN (ours) & $0.987$\scriptsize $\pm 0.013$ \\
      IPR-MPNN$^{*}$ (ours) & $1.000$\scriptsize $\pm 0.000$ \\
      \bottomrule
    \end{tabular}}
    \label{tab:csl}
    \end{sc}
    \end{small}
  \end{minipage}
\end{table}

\begin{table}[t]
	\caption{More memory consumption details together with train and validation times per epoch in seconds. We compare to the base GINE model, various variants of the SAT Graph Transformer,  GraphGPS, and the PR-MPNN rewiring technique. IPR-MPNNs maintain low memory usage while also being significantly faster when compared to the Graph Transformers and PR-MPNN. The experiments were performed on the \textsc{OGBG-Molhiv} dataset, with the same batch size and the same machine that contains an Nvidia RTX A5000 GPU and an Intel i9-11900K CPU.}
	\label{tab:runtime_full}
	\vskip 0.15in
	\begin{center}
		\begin{small}
			\begin{sc}
				\begin{tabular}{lcccccc}
					\toprule
					Model             & \#Params & V. Nodes & Samples & Train s/ep           & Val s/ep & Mem. Usage           \\
					\midrule
					GINE              & $502k$ & -  & -                   & 3.19\scriptsize$\pm$0.03   & 0.20\scriptsize$\pm$0.01 & 0.5GiB \\
					\midrule
					K-ST SAT$_{\text{GINE}}$ & $506k$ & -  & -                   & 86.54\scriptsize$\pm$0.13  & 4.78\scriptsize$\pm$0.01 & 11.0GiB \\
					K-SG SAT$_{\text{GINE}}$ & $481k$ & -  & -                   & 97.94\scriptsize$\pm$0.31  & 5.57\scriptsize$\pm$0.01 & 8.5GiB\\
					K-ST SAT$_{\text{PNA}}$  & $534k$ & -  & -                   & 90.34\scriptsize$\pm$0.29  & 4.85\scriptsize$\pm$0.01 & 10.1GiB \\
					K-SG SAT$_{\text{PNA}}$  & $509k$ & -  & -                   & 118.75\scriptsize$\pm$0.50 & 5.84\scriptsize$\pm$0.04 & 9.1GiB \\
					GraphGPS  & $558k$ & -  & -                   & 17.02\scriptsize$\pm$0.70 & 0.65\scriptsize$\pm$0.06 & 6.6GiB \\
					\midrule
					PR-MPNN$_{\text{Gmb}}$   & $582k$ & -  & 20                  & 15.20\scriptsize$\pm$0.08  & 1.01\scriptsize$\pm$0.01 & 0.8GiB\\
					PR-MPNN$_{\text{Imle}}$  & $582k$ & -  & 20                  & 15.01\scriptsize$\pm$0.22  & 1.08\scriptsize$\pm$0.06 & 0.9GiB\\
					PR-MPNN$_{\text{Sim}}$   & $582k$ & -  & 20                  & 15.98\scriptsize$\pm$0.13  & 1.07\scriptsize$\pm$0.01 & 2.1GiB\\
                     \midrule
					IPR-MPNN$_{\text{Sim}}$   & $548k$ & 2  & 1                 & 7.31\scriptsize$\pm$0.08  & 0.34\scriptsize$\pm$0.01 & 0.8GiB \\
					IPR-MPNN$_{\text{Sim}}$   & $548k$ & 4  & 2                 & 	7.37\scriptsize$\pm$0.08  & 0.35\scriptsize$\pm$0.01 & 0.8GiB\\
					IPR-MPNN$_{\text{Sim}}$   & $548k$ & 10  & 5                 & 	7.68\scriptsize$\pm$0.10  & 0.35\scriptsize$\pm$0.01 & 0.9GiB\\
					IPR-MPNN$_{\text{Sim}}$   & $549k$ & 20  & 10                 & 	8.64\scriptsize$\pm$0.06  & 0.35\scriptsize$\pm$0.01 & 1.1GiB \\
					IPR-MPNN$_{\text{Sim}}$   & $549k$ & 30  & 20                 & 9.41\scriptsize$\pm$0.38  & 0.43\scriptsize$\pm$0.01 & 1.2GiB \\
					\bottomrule
				\end{tabular}
			\end{sc}
		\end{small}
	\end{center}
	\vskip -0.1in
\end{table}

\section{Expressivity}
\label{sec:th_expressivity}

\begin{figure}[t]
    \centering
    \includegraphics[width=0.6\textwidth]{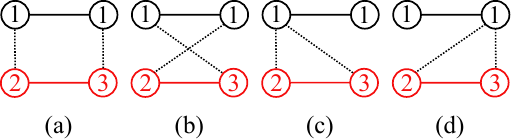}
    \caption{Possible new configurations.}
    \label{fig:separate_iso}
\end{figure}

\begin{figure}[t]
    \centering
    \includegraphics[width=0.6\textwidth]{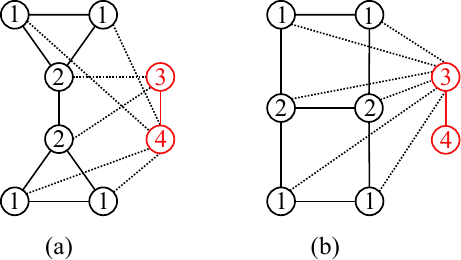}
    \caption{Separating graphs with the same stable \wlone{} color.}
    \label{fig:separate_non_iso}
\end{figure}

\begin{figure}[t]
    \centering
    \includegraphics[width=0.2\textwidth]{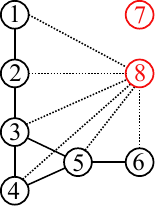}
    \caption{An example where we preserve the isomorphism when all the nodes from the initial graph are discretely colored. For each base node, we assign a high probability of connecting to the same virtual node (node 8).}
    \label{fig:preserving_iso}
\end{figure}

We first discuss three scenarios where we (1) separate isomorphic graphs with the same non-discrete stable \wlone{} colorings (\Cref{fig:separate_iso}), (2) separate non-isomorphic graphs with the same non-discrete \wlone{} colorings (\Cref{fig:separate_non_iso}), and (3) preserve isomorphisms between isomorphic graphs with the same discrete \wlone{} colorings (\Cref{fig:preserving_iso}). For each of the three examples, we assume two unique virtual nodes with $k=1$, i.e., we sample a single edge between a base node and a virtual node.

For example (1), consider \Cref{fig:separate_iso} and assume that, for each color class, the upstream MPNN randomly selects a virtual node, i.e., we have a uniform prior. Since both base nodes have the same color class, each base node connects to one of the virtual nodes uniformly at random. The scenario where two graphs remain isomorphic is when they connect to exactly the same virtual nodes or either connect as in case (a) or (b). Therefore, we have a high probability of separating these isomorphic graphs.

For example (2), we again produce uniform priors for each color class of the graphs in \Cref{fig:separate_non_iso}. Once again, there is a high probability of separation. One possible configuration that can separate between the two graphs is shown in \Cref{fig:separate_non_iso} where in the first graph (a), the nodes colored with $1$ get assigned to virtual node $4$, while the nodes colored with $2$ are assigned to virtual node $3$. In the second graph (b), all nodes get assigned to the same virtual node $3$.

For example (3) in \Cref{fig:preserving_iso}, we consider producing priors that assign, with high probability, the same virtual node to all of the nodes that are in color classes of cardinality $1$. This approach ensures that the discretely-colored graphs remain isomorphic with high probability.

The intuition is that if we want to distinguish between graphs that have the same \wlone{} stable color partitioning, the upstream model needs to produce \say{uninformative} prior weights for some color classes. However, preserving isomorphisms is most likely when the nodes in the same color class in the two graphs get assigned to the same virtual nodes. Since our upstream model is as powerful as \wlone{}, we can control the prior distribution for the color classes but not for individual nodes, therefore we can only guarantee high assignment probabilities for nodes in color classes of cardinality $1$.

Next, we formally argue that we can preserve, with arbitrarily high probability, isomorphisms between graphs that have the same discrete stable \wlone{} color partitions, as well as isomorphisms between subgraphs with the same discrete stable \wlone{} color partitions.

\begin{lemma}
\label{thm: subgraph_same_1wl}
Let $G$ and $H$ be a pair of graphs with the same \wlone{} graph coloring, i.e., they are  \wlone{} non-distinguishable. Let $k \in \mathbb{N}$, let $G'_k$ be \new{color-induced subgraph} where $V(G'_k) \coloneq \{v \in V(G) \mid c^1_{\infty}(v) = k\} \subseteq V(G)$, and $V(H')$ similarly. Then the subgraphs induced by $V(G'_k)$ and $V(H'_k)$ are still not \wlone{}-distinguishable. %
\end{lemma}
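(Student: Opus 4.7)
\textbf{Proof plan for \Cref{thm: subgraph_same_1wl}.} The plan is to exploit the regularity structure induced by a single color class of the stable \wlone{} coloring. Since $G$ and $H$ share the same stable coloring, their color-class histograms agree, so in particular $|V(G'_k)| = |V(H'_k)|$. Call this common cardinality $n_k$, and call the common original vertex label appearing on color class $k$ simply $\ell_0$ (it is well-defined because the stable coloring refines the initial labeling).

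The main step is to show that $G'_k$ and $H'_k$ are both $d_k$-regular for the same degree $d_k$. For any two vertices $u,v$ lying in the same color class of the stable coloring $C^1_{\infty}$, the defining property of a stable coloring says that the multisets $\oms C^1_{\infty}(w) \mid w \in N(u) \cms$ and $\oms C^1_{\infty}(w) \mid w \in N(v) \cms$ are equal. In particular, every color-$k$ vertex has the same number $d_k$ of color-$k$ neighbors in $G$. The induced subgraph $G'_k$ has edge set exactly those edges of $G$ with both endpoints of color $k$, so the degree of any vertex in $G'_k$ equals its number of color-$k$ neighbors in $G$, which is $d_k$. The identical argument applied to $H$ produces the same degree $d_k$, by virtue of $G$ and $H$ sharing the same stable coloring.

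From here the conclusion is routine. Both $G'_k$ and $H'_k$ are $d_k$-regular graphs on $n_k$ vertices, with every vertex carrying the same initial label $\ell_0$. Running \wlone{} on $G'_k$ from this constant initial coloring, a trivial induction shows that after every iteration all vertices still share a single color: the update rule only sees $(C^1_t(v), \oms C^1_t(w) \mid w \in N_{G'_k}(v) \cms)$, which takes the same value at every $v$ since $G'_k$ is regular and its current coloring is constant. Hence the stable coloring of $G'_k$ assigns a single color to all $n_k$ vertices, and the same holds for $H'_k$. Therefore the two stable colorings produce identical color histograms, so $G'_k$ and $H'_k$ are not \wlone{}-distinguishable.

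The only subtle point, and the one I expect to need the most care, is the handling of initial labels when passing to induced subgraphs: one must verify that vertices in a common stable color class share the same original label, so that the subgraph can be regarded as having a constant initial coloring and the regular-graph argument applies unchanged. This is essentially immediate from the fact that the stable coloring is a refinement of $\ell$, but it is the place where the hypothesis ``same stable coloring of $G$ and $H$'' is doing genuine work rather than the weaker ``same color-class sizes''.
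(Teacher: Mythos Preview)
Your proof is correct and takes a genuinely different route from the paper's. The paper argues via the unrolling-tree characterization of \wlone{} colors: since any two vertices in $V(G'_k)\cup V(H'_k)$ share the stable color $k$, their unrolling trees in $G$ and $H$ are isomorphic; pruning away the subtrees rooted at non-$k$-colored vertices yields the unrolling trees in $G'_k$ and $H'_k$, which are therefore also isomorphic, so \wlone{} cannot separate them. Your argument instead extracts the single structural fact that equitability of the stable partition forces $G'_k$ and $H'_k$ to be $d_k$-regular on $n_k$ vertices with a constant initial label, after which \wlone{} trivially stabilizes to a single color on each. Your approach is more elementary---it needs no auxiliary characterization of \wlone{}---and makes explicit that the induced subgraphs are in fact \wlone{}-trivial (constant stable coloring), which is slightly sharper than mere indistinguishability. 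The paper's unrolling-tree argument, by contrast, would extend with essentially no change to subgraphs induced by a \emph{union} of color classes, where regularity fails and your argument would not apply directly; but for the single-class statement at hand, your route is the cleaner one.
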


\begin{proof}
\label{proof:subgraph_same_1wl}
To prove the lemma, we use the concept of an unrolling tree for a node; see, e.g.,~\citet{Morris2020b}. That is, for a node, we recursively unroll its neighbors, resulting in a tree. It is easy to see that two nodes get the same colors under \wlone{} if and only if such trees are isomorphic; see~\citet{Morris2020b} for details.

Consider nodes $v, u \in V(G'_k) \cup V(H'_k)$ that share the same stable \wlone{} coloring $k$. Based on the above, $v$ and $u$ must have isomorphic unrolling trees. Now remove subgraphs of the two trees not rooted at the vertex with color $k$. Since both $v$ and $u$ have isomorphic unrolling trees, the resulting trees are isomorphic. Hence, running \wlone{} on top of $G'_k$ and  $H'_k$ will still not distinguish them. 
\end{proof}

\begin{lemma}
\label{thm:add_vnode_iso}
Let $G$ and $H$ be a pair of graphs with the same stable coloring under \wlone. If we add a finite number of virtual nodes on both graphs $C(G), C(H)$, and connect these virtual nodes based on \wlone{} colors of the original graphs, i.e., two equally colored vertices get assigned the same virtual nodes. Then, the augmented graphs $\hat{G}$ and $\hat{H}$ have the same stable partition.
\end{lemma}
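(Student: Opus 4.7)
The plan is to prove the lemma by induction on the number of \wlone{} iterations, showing that running color refinement on $\hat{G}$ and $\hat{H}$ produces the same multiset of colors at every step; in particular, the stable partitions coincide.

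First I would set up a canonical correspondence. Since $G$ and $H$ share a stable \wlone{} coloring, for each stable color $k$ the sets $V_G^{(k)} \coloneqq \{v\in V(G) \mid c^1_\infty(v)=k\}$ and $V_H^{(k)}$, defined analogously, have the same cardinality. By hypothesis the virtual nodes attached to a vertex depend only on $c^1_\infty(v)$, so $C(G)$ can be identified with $C(H)$ in a canonical way, and any vertex of $V_G^{(k)}$ is connected to exactly the same virtual nodes as any vertex of $V_H^{(k)}$. I would then initialise virtual nodes with identical labels under this identification, so that at step $0$ corresponding virtual nodes share a color and the original vertices inherit their labels from $G, H$.

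Next I would prove by induction on $t$ the following invariant for the run of \wlone{} on $\hat{G}$ and $\hat{H}$: (i) corresponding virtual nodes receive identical colors at iteration $t$, and (ii) for every stable color $k$ of the original graphs, all vertices of $V_G^{(k)} \cup V_H^{(k)}$ receive the same color at iteration $t$. Combined with $|V_G^{(k)}|=|V_H^{(k)}|$ and the identification of the virtual-node sets, this yields equality of color multisets between $\hat{G}$ and $\hat{H}$ at every step. The inductive step for an original vertex $v$ uses that its neighbourhood in $\hat{G}$ is the disjoint union of $N_G(v)$ and the virtual nodes prescribed by $c^1_\infty(v)$: the colored multiset of the former matches its counterpart in $H$ by the original stable-coloring hypothesis together with (ii), and the colored multiset of the latter matches by (i). The inductive step for a virtual node $c$ is symmetric: its set of original-node neighbours in $\hat{G}$ is a union of entire color classes $V_G^{(k)}$, so the colored multiset aggregated over it agrees with that of the corresponding virtual node in $\hat{H}$ by (ii) and the cardinality equality; any virtual-to-virtual edges, if present, are handled by (i).

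The one subtlety worth isolating is maintaining that within a single color class $V_G^{(k)}$ (and likewise $V_H^{(k)}$) all vertices continue to carry the same \wlone{} color throughout refinement on $\hat{G}$, which is what lets us argue class-by-class rather than vertex-by-vertex. This holds because within-class indistinguishability in $G$ guarantees that the original-neighbourhood multisets are identical across the class, and the hypothesis that equally colored vertices attach to the same virtual nodes guarantees the same for the virtual neighbourhoods. Since \wlone{} terminates in finitely many iterations, passing to the stable coloring yields the same partition on $\hat{G}$ and $\hat{H}$, completing the proof.
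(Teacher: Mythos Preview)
Your proposal is correct and follows essentially the same approach as the paper: both argue by induction on the number of \wlone{} iterations, using the key fact that vertices sharing a stable color acquire identical virtual-node neighborhoods, so their augmented neighborhoods remain indistinguishable. The paper's proof is a two-sentence sketch of exactly this idea, whereas you have carefully spelled out the joint invariant on original and virtual nodes and verified each case of the inductive step.
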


\begin{proof}
The proof is by straightforward induction on the number of iterations using the fact that two nodes with the same color will be assigned to the same virtual node. That is, the neighborhood of two such nodes is extended by the same nodes.

\end{proof}

Besides, we leverage the following result by \citet{qian2023probabilistically,Mor+2019}.

\begin{lemma}[\cite{qian2023probabilistically,Mor+2019}]
\label{node-wl-universal}
Let $G$ be an $n$-order graph and let $c \colon V(G) \to \bbR^d$, $d > 0$, be a \wlone-equivalent node coloring. Then, for all $\varepsilon > 0$, there exists a (permutation-equivariant) MPNN $f \colon V(G) \to \Rb^d$, such that
\begin{align*}
    \max_{v \in V(G)} \norm{  f(v) -  c(v) } < \varepsilon.     
\end{align*}
\end{lemma}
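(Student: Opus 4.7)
My plan is to construct the MPNN $f$ layer-by-layer so that after $T$ rounds of message passing it produces colors $\chi_T(v) \in \mathbb{R}^d$ which, up to error $\varepsilon$, match the stable \wlone{} coloring $C^1_\infty(v)$, and then compose with a final readout MLP that maps each such color to the target value $c(v)$. The basic template is the construction underlying the GIN equivalence result of \citet{xu2018how} and the MPNN-WL correspondence of \citet{Mor+2019}: at each iteration, $\mathsf{AGG}^{(t)}$ and $\mathsf{UPD}^{(t)}$ are chosen as MLPs approximating a function that is injective on multisets of the colors produced at iteration $t-1$, so that two nodes receive the same color after layer $t$ if, and only if, they receive the same \wlone{} color at step $t$.

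First I would recall that the stable coloring $C^1_\infty$ is attained in at most $n$ iterations, so it suffices to simulate $T \le n$ rounds. The coloring $c$, being \wlone{}-equivalent, factors as $c(v) = \phi\bigl(C^1_\infty(v)\bigr)$ for some function $\phi$ defined on the finite image of $C^1_\infty$. I would use Corollary 6 of \citet{xu2018how} (or equivalently Lemma 5 there): for multisets drawn from a countable universe there exists an injective function $g$ such that $g(S) = \sum_{x \in S} \psi(x)$ for some $\psi$, and therefore a sum-aggregation followed by an MLP can realize any injective combine-function on such multisets. Inductively, after initializing $\hb_v^{(0)}$ to a real-valued encoding of the initial labels (or to constants, if the graph is unlabeled), I pick at each iteration $t$ an MLP implementing an injective $\mathsf{COMBINE}_t$ on the finite set of pairs (own color, multiset of neighbor colors) that arise at iteration $t-1$ across $V(G)$; since the universal approximation theorem \citep{Hor+1991} allows this MLP to be realized to arbitrary accuracy in sup-norm on any compact set, for a fixed graph the colorings stay in a bounded region and we can choose the approximation error at layer $t$ to be small enough that the composite error after $T$ layers remains below any preset $\delta > 0$.

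Then I would add one final readout MLP $\rho$ that maps each stable color $\chi_T(v)$ to $c(v)$. Since after $T$ layers the map $v \mapsto \chi_T(v)$ is injective on color classes of $C^1_\infty$, the mapping $\chi_T(v) \mapsto c(v)$ is well defined on a finite set of points, and universal approximation of MLPs lets us realize $\rho$ so that $\|\rho(\chi_T(v)) - c(v)\| < \varepsilon/2$ for each $v \in V(G)$. Combining this with a sufficiently accurate propagation so that $\|\rho(\chi_T(v)) - \rho(\tilde\chi_T(v))\| < \varepsilon/2$ (via uniform continuity of $\rho$ on a compact set containing all realized colors) yields the desired bound $\max_{v} \|f(v) - c(v)\| < \varepsilon$. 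Permutation equivariance is automatic because the construction uses only standard sum/MLP message-passing primitives.

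The main obstacle is the error-propagation bookkeeping: the injective combine-functions are only approximated by MLPs, so at each layer two nodes that should receive exactly equal colors will instead receive nearly equal colors, and two nodes that should differ must remain separated by a margin that the next layer's MLP can still resolve. The clean way to handle this is to first fix an exact-arithmetic realization (using the symbolic sum-based injective encoding of \citet{xu2018how}) of the whole $T$-layer MPNN and the readout $\rho$, observe that all intermediate activations lie in a compact set, then invoke uniform continuity together with universal approximation \emph{backward} from the output: choose the readout approximation so that errors at most $\eta_T$ in its input give output error at most $\varepsilon$, then choose layer $T$'s approximation to induce input error at most $\eta_T$, and so on down to layer $1$; since $T \le n$ is finite, a finite cascade of $\eta_t$'s suffices.
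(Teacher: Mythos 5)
The paper does not prove \cref{node-wl-universal} at all---it is imported verbatim from \citet{qian2023probabilistically} and \citet{Mor+2019}---so there is no in-paper argument to compare against. Your construction is correct and is essentially the standard one underlying those cited results: simulate at most $n$ rounds of \wlone{} with injective GIN-style combine functions, use the fact that a \wlone{}-equivalent $c$ factors through the stable coloring to define a readout on the finite set of realized colors, and control the MLP approximation errors by a backward cascade over the finitely many layers (for a single fixed finite graph one could even sidestep the cascade, since any function on a finite point set is exactly realizable by a ReLU MLP). No gaps.
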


\begin{theorem}
\label{thm:main_sep}

    Let $k>0$, $\varepsilon\in(0,1)$, and $G$, $H$ be two graphs with identical \wlone{} stable colorings. Let $M$ be the set of ordered virtual nodes, $V_G$ and $V_H$ be the subset of nodes in $G$ and $H$ that have a color class of cardinality $1$, with $|V_G|=|V_H|=d$, and $W_G$, $W_H$ the subset of nodes that have a color class of cardinality greater than $1$, with $|W_G|=|W_H|=n$. Then, for all choices of \wlone{}-equivalent functions $f$,
    \begin{enumerate}
        \item[(1)] there exists a conditional probability mass function $p_{(\btheta, k)}$ that does \emph{not} separate $G[V_G]$ and $H[V_H]$ with probability at least $1-\varepsilon$.
        \item[(2)] There exists a conditional probability mass function $p_{(\btheta, k)}$ that separates $G[W_G]$ and $H[W_H]$ with probability strictly greater than $0$.
    \end{enumerate}
\end{theorem}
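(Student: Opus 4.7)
Both parts build on the ability of a \wlone{}-equivalent MPNN to produce priors that are arbitrary functions of \wlone{} color classes (Lemma~\ref{node-wl-universal}) combined with the color-preservation result Lemma~\ref{thm:add_vnode_iso}. I would split the argument into a ``peaked-prior'' construction for Part~(1) and a ``flat-prior'' construction for Part~(2), in both cases letting the shared stable coloring dictate how $G$ and $H$ are driven together or apart.

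For Part~(1), every $v \in V_G$ has a singleton color class, so its \wlone{} color uniquely identifies a counterpart in $V_H$ with the same color. I would invoke Lemma~\ref{node-wl-universal} to build an upstream MPNN whose output depends only on the \wlone{} color and, after a temperature-scaled softmax, concentrates on a preselected $k$-subset $S_v \subseteq M$; choosing the same $S_v$ for color-matched nodes in $G$ and $H$ forces identical peak-mass subsets across graphs. Taking the concentration sharp enough makes the joint event ``every node in $V_G \cup V_H$ samples its $S_v$'' have probability at least $1-\varepsilon$. Conditioned on this event, equally-colored nodes connect to the same virtual nodes in both graphs, so Lemma~\ref{thm:add_vnode_iso} yields that $\hat G[V_G]$ and $\hat H[V_H]$ share a stable \wlone{} coloring and thus cannot be separated by any \wlone{}-equivalent downstream $f$.

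For Part~(2), I would instead make the upstream prior uniform over the virtual nodes for every node in a fixed non-trivial color class. Pick a class $Q$ whose restrictions to $W_G$ and $W_H$ have common cardinality $c \geq 2$. Under the exactly-$k$ sampler of \citet{ahmed2022simple}, every $k$-subset carries positive probability; with probability strictly greater than zero, all $c$ nodes of $Q$ in $G$ pick the same $k$-subset $S \subseteq M$, while in $H$ at least two distinct subsets occur among the $c$ nodes. The augmented subgraphs then differ in a \wlone{}-visible invariant, namely the multiset of virtual-node degrees incoming from $Q$. A downstream \wlone{}-equivalent $f$, again produced via Lemma~\ref{node-wl-universal}, then separates the two subgraphs on this event, giving separation with strictly positive probability.

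The main obstacle is verifying the \wlone{}-visibility of the distinguishing invariant in Part~(2): it is easy to arrange sampled configurations that \emph{structurally} differ, but one must check that this difference survives color refinement on $\hat H[W_H]$ versus $\hat G[W_G]$ rather than being washed out by the shared base-node coloring. I would do this by showing that after one refinement step, the virtual nodes' \wlone{} labels encode the multiset of their $Q$-neighbors and therefore differ across the two sampled configurations. In Part~(1), the subtle point is pushing the $\varepsilon$-approximation of Lemma~\ref{node-wl-universal} through the softmax and the $k$-subset sampler; this is a routine continuity argument but needs to be done carefully enough to control the joint probability over all nodes in $V_G \cup V_H$ simultaneously.
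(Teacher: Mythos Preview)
Your proposal is correct and follows essentially the same approach as the paper: peaked priors (via Lemma~\ref{node-wl-universal}) plus Lemma~\ref{thm:add_vnode_iso} for Part~(1), and uniform priors giving positive mass to a \wlone{}-distinguishing configuration for Part~(2). The paper's separating configuration in Part~(2) is marginally simpler---it assigns the block $\{1,\dots,k\}$ to \emph{all} of $W_G$ and shifts at least one node of $W_H$ to $\{2,\dots,k+1\}$, then lower-bounds the probability of this event by $(1-\varepsilon)/\binom{m}{k}^{2n}$---but your single-color-class version with differing virtual-node degree multisets works just as well. One small phrasing point: in Part~(2) you speak of \emph{producing} a downstream $f$ via Lemma~\ref{node-wl-universal}, but the quantifier order is ``for all \wlone{}-equivalent $f$, there exists $p$''; since any \wlone{}-equivalent $f$ separates exactly when \wlone{} does, it suffices (as you effectively argue) to show the sampled rewirings are \wlone{}-distinguishable with positive probability. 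Your explicit flag about verifying \wlone{}-visibility of the distinguishing invariant is more careful than the paper, which simply asserts the configuration separates.
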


\begin{proof}

Let $M=\{v_1, ..., v_m\}$ be the set of $m$ ordered virtual nodes and $d=|V_G|=|V_H|$. To prove this theorem, we can leverage \cref{node-wl-universal} to assign distinct and arbitrary priors for every color class.

For $(1)$, we know that since $G$ and $H$ have identical \wlone{} stable colorings and $V_G$, $V_H$ have a color class of cardinality 1, then $G[V_G]$ and $H[V_H]$ must be discrete and isomorphic. Using \Cref{node-wl-universal}, we can obtain an upstream MPNN that assigns a sufficiently high prior $\theta_i$ such that, when we sample from the exactly-$k$ distribution, we can assign the corresponding $k$ virtual nodes to a base node with probability at least $\sqrt[2d]{1-\varepsilon}$. 

To demonstrate the existence of such a set of priors $\btheta$, let $\delta\in(0, 1)$ and $S\subset M$ be a subset of $k$ virtual nodes. Let $w_1 > w_2$ be two prior weights such that $\theta_i = w_1$ if $v_i\in S$, and $\theta_i = w_2$ if $v_i \in M \setminus S$. We have that 

$$p_{\theta,k}(S) \geq \delta \left(\sum_{i=0}^{k}\binom{k}{i}\binom{m-k}{k-i}w_1^{i}w_2^{k-i}\right) = \delta Z,$$

with the upper bound \citep{qian2023probabilistically}

$$Z \leq w_1^{k} + \left(\binom{m}{k}-1\right)w_2w_1^{k-1},$$

Thus, $\btheta$ exists and can be obtained using this inequality.

Next, we set $\delta=\sqrt[2d]{1-\varepsilon}$. Consequently, the probability that the sampled virtual nodes are identical for both graphs is at least $\sqrt[2d]{1-\varepsilon}^{2d}=1-\varepsilon$. Finally, using
\cref{thm:add_vnode_iso}, we know that the two graphs also retain their color partitions and remain isomorphic with probability at least $1-\varepsilon$.

For $(2)$, it is easy to see that, for any prior weights $\btheta$ that we assign to the color classes of cardinality greater than $1$, there is at least one configuration separating the two graphs. For instance, since $k < m$, we can separate $G[W_G], H[W_H]$ by assigning the first $k$ virtual nodes $\{1, ..., k\}$
to every node in $G[W_G]$, but have at least one node in $H[W_H]$ be assigned to the next $k$ nodes $\{2, ..., k+1\}$. More concretely, let $\varepsilon \in (0,1)$, $v\in G[W_G]\cup H[W_H]$ and $\btheta_u$ be an uniform prior, i.e. $\btheta_1=\btheta_2=\dotsc =\btheta_m$. Again, we use \cref{node-wl-universal} and obtain a distribution $p_{\btheta, k}$, arbitrarily close to the uniform distribution. Then, the probability of making the two graphs distinguishable by obtaining the mentioned example is greater than $\frac{1-\varepsilon}{\binom{m}{k}^{2n}}$, which is strictly greater than 0. For a visual example, see \Cref{fig:separate_non_iso}.
\end{proof}

The next Corollary follows directly from \cref{thm:main_sep}, and recovers Theorem 4.1 from \citet{qian2023probabilistically}.

\begin{corollary}
    For sufficiently large $n$, for every $\varepsilon \in (0,1)$, a set $M$ of ordered virtual nodes, and $k>0$, we have that almost all pairs, in the sense of \citet{Bab+1980}, of isomorphic n-order graphs $G$ and $H$ and all permutation-invariant, \wlone{}-equivalent functions $f \colon \mathfrak{A}_n \to \Rb^d$, $d > 0$, there exists a probability mass function $p_{(\btheta,k)}$ that separates the graph $G$ and $H$ with probability at most $\varepsilon$ with respect to $f$. %
\end{corollary}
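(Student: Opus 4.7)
The plan is to combine the classical Babai--Erdős--Selkow result~\citep{Bab+1980} with part~(1) of \Cref{thm:main_sep_2}. The former states that for sufficiently large $n$, the fraction of $n$-order graphs whose stable \wlone{} coloring is discrete tends to one. Since the stable \wlone{} coloring is an isomorphism invariant, if $G$ has a discrete stable coloring then so does every $H$ with $H \simeq G$; hence the fraction of ordered isomorphic pairs $(G,H)$ of $n$-order graphs for which both $G$ and $H$ have discrete stable colorings also tends to one. This handles the ``almost all pairs'' quantifier appearing in the corollary.

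For any such pair, every node of $G$ (resp.\ $H$) lies in a \wlone{} color class of cardinality one, so in the notation of \Cref{thm:main_sep_2} we have $V_G = V(G)$, $V_H = V(H)$ with $|V_G|=|V_H|=n$, and $W_G = W_H = \emptyset$. In particular, the induced subgraphs $G[V_G]$ and $H[V_H]$ coincide with $G$ and $H$ themselves, so applying part~(1) of \Cref{thm:main_sep_2} directly yields, for any $\varepsilon \in (0,1)$, any set $M$ of ordered virtual nodes, any $k>0$, and any permutation-invariant \wlone{}-equivalent function $f$, a conditional probability mass function $p_{(\btheta,k)}$ that does \emph{not} separate $G$ and $H$ with probability at least $1-\varepsilon$. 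Equivalently, $p_{(\btheta,k)}$ separates $G$ and $H$ with probability at most $\varepsilon$ with respect to $f$, which is exactly the desired conclusion.

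The main obstacle is essentially bookkeeping: one has to make precise the notion of ``almost all pairs'' used by \citet{Bab+1980} and check that it matches the quantifier appearing in the corollary's statement, which boils down to observing that having a discrete stable \wlone{} coloring is an isomorphism-invariant property, so the fraction of ``bad'' ordered isomorphic pairs is bounded by twice the fraction of single graphs without a discrete stable coloring. Once this reduction is in place, the corollary follows immediately from \Cref{thm:main_sep_2}(1), since the hypothesis of that theorem (identical stable colorings, with all relevant color classes of cardinality one) is automatically satisfied for isomorphic pairs whose common stable coloring is discrete.
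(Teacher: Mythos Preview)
Your proposal is correct and follows essentially the same approach as the paper: invoke \citet{Bab+1980} to conclude that almost all $n$-order graphs have a discrete stable \wlone{} coloring, so that for almost all isomorphic pairs one has $V_G = V(G)$, $V_H = V(H)$, $W_G = W_H = \emptyset$, and then apply part~(1) of \Cref{thm:main_sep_2}. Your bookkeeping around the ``almost all pairs'' quantifier is slightly more explicit than the paper's, but the argument is the same.
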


\begin{proof}
    We know from \citet{Bab+1980} that an \wlone{}-equivalent algorithm will produce a discrete color partition for almost all pairs of isomorphic graphs $G$, $H$ of sufficient size. We use \cref{thm:main_sep} and set $W_G = W_H = \emptyset$ and conclude that we maintain isomorphisms between almost all isomorphic graphs.
\end{proof}

\section{Limitations}
\label{sec:limitations}

A limitation of our approach is the assumption that the number of virtual nodes $m$ is significantly smaller than the total number of nodes $n$. As the number of virtual nodes increases, the runtime is also expected to rise (see \cref{tab:runtime_full} for a detailed example). In the worst-case scenario, where $m = n$, our method exhibits quadratic complexity. However, in all real-world datasets we have tested, the required number of virtual nodes for achieving optimal performance is low. For more information, refer to \cref{tab:hyperparams}. Another question is whether IPR-MPNNs can perform well on node-level tasks. We have designed our rewiring method specifically to solve long-range graph-level tasks (such as the tasks on the molecular datasets from the \textsc{Long-Range Graph Benchmark} \citep{Dwivedi2022-vv}). Nevertheless, IPR-MPNNs and adaptations might also work on node-level tasks, but we leave this question open for further work.

\end{appendices}

\end{document}